\definecolor{shadecolor}{gray}{0.9}
\newcommand{\red}[1]{\textcolor{BrickRed}{#1}}
\newcommand{\orange}[1]{\textcolor{BurntOrange}{#1}}
\newcommand{\green}[1]{\textcolor{OliveGreen}{#1}}
\newcommand{\blue}[1]{\textcolor{MidnightBlue}{#1}}
\newcommand{\sky}[1]{\textcolor{SkyBlue}{#1}}
\newcommand{\note}[1]{}  
\DeclareRobustCommand{\sidenote}[1]{} 
\def\rT{{\rm T}}
\def\mI{\mathrm{I}}
\newcommand*\E[1]{\mathbb{E}\left[#1\right]}
\newcommand*\Ep[2]{\mathbb{E}_{#1}\left[#2\right]}
\newcommand*\lrw[1]{\left\langle#1\right\rangle}
\newcommand*\lrp[1]{\left(#1\right)}
\newtheorem{theorem}{Theorem}
\newtheorem{lemma}{Lemma}
\newlength{\figwidth}
\newacronym{ace}{ACE}{Adaptive Calibration Error}
\newacronym{aucpr}{AUC-PR}{area under the precision-recall curve}
\newacronym{aucroc}{AUC-ROC}{area under the receiver operating characteristic curve}
\newacronym{ccs}{CCS}{Clinical Classifications Software}
\newacronym{ece}{ECE}{Expected Calibration Error}
\newacronym{elbo}{ELBO}{evidence lower bound}
\newacronym{icu}{ICU}{intensive care unit}
\newacronym{kl}{KL}{Kullback-Leibler}
\newacronym{lstm}{LSTM}{long short-term memory}
\newacronym{mimic}{MIMIC-III}{Medical Information Mart for Intensive Care}
\newacronym{nn}{NN}{neural network}
\newacronym{ppv}{PPV}{positive predictive value}
\newacronym{pr}{PR}{precision-recall}
\newacronym{rnn}{RNN}{recurrent neural network}
\newacronym{vi}{VI}{variational inference} 
\begin{document}
\setlength{\footskip}{3em}
\setlength{\abovedisplayskip}{3pt}
\setlength{\belowdisplayskip}{3pt}

\icmltitlerunning{Efficient and Scalable Bayesian Neural Nets with Rank-1 Factors}

\twocolumn[
\icmltitle{Efficient and Scalable Bayesian Neural Nets with Rank-1 Factors}

\icmlsetsymbol{equal}{*}
\icmlsetsymbol{note}{\textdagger}

\begin{icmlauthorlist}
\icmlauthor{Michael W. Dusenberry}{equal,note,google}
\icmlauthor{Ghassen Jerfel}{equal,google,duke}
\icmlauthor{Yeming Wen}{google,toronto}
\icmlauthor{Yi-An Ma}{google,ucsd}
\icmlauthor{Jasper Snoek}{google}
\icmlauthor{Katherine Heller}{google,duke}
\icmlauthor{Balaji Lakshminarayanan}{google}
\icmlauthor{Dustin Tran}{google}
\end{icmlauthorlist}
\icmlaffiliation{google}{Google Brain, Mountain View, USA}
\icmlaffiliation{duke}{Duke University, Durham, USA}
\icmlaffiliation{ucsd}{University of California, San Diego, USA}
\icmlaffiliation{toronto}{University of Toronto, Toronto, CA}

\icmlcorrespondingauthor{Michael W. Dusenberry}{dusenberrymw@google.com}
\icmlcorrespondingauthor{Ghassen Jerfel}{ghassen@google.com}

\icmlkeywords{Bayesian deep learning, machine learning, neural nets}

\vskip 0.3in
]

\printAffiliationsAndNotice{\icmlEqualContribution $^\dagger$Work completed as a Google AI Resident.}

\begin{abstract}
Bayesian neural networks (BNNs) demonstrate promising success in improving the robustness and uncertainty quantification of modern deep learning. However, they generally struggle with underfitting at scale and parameter efficiency. On the other hand, deep ensembles have emerged as alternatives for uncertainty quantification that, while outperforming BNNs on certain problems, also suffer from efficiency issues. It remains unclear how to combine the strengths of these two approaches and remediate their common issues. To tackle this challenge, we propose a rank-1 parameterization of BNNs, where each weight matrix involves only a distribution on a rank-1 subspace. We also revisit the use of mixture approximate posteriors to capture multiple modes, where unlike typical mixtures, this approach admits a significantly smaller memory increase (e.g., only a 0.4\% increase for a ResNet-50 mixture of size 10). We perform a systematic empirical study on the choices of prior, variational posterior, and methods to improve training.
For ResNet-50 on ImageNet, Wide ResNet 28-10 on CIFAR-10/100, and an RNN on MIMIC-III,
rank-1 BNNs achieve state-of-the-art performance across log-likelihood, accuracy, 
and calibration on the test sets and out-of-distribution variants.\footnote{
Code: \url{https://github.com/google/edward2}.
}
\end{abstract}
\glsresetall

\vspace{-4ex}
\section{Introduction}
Bayesian neural networks (BNNs) marginalize over a distribution of neural network models for prediction, allowing for uncertainty quantification and improved robustness in deep learning.
In principle, BNNs can permit graceful failure, signalling when a model does not know what to predict \citep{kendall2017uncertainties,dusenberry2019analyzing},
and can also generalize better to
out-of-distribution examples
\citep{louizos2017multiplicative,malinin2018predictive}.
However, there are two important challenges prohibiting their use in practice. 

First, Bayesian neural networks often underperform on metrics such as accuracy and do not scale as well as simpler baselines \citep{gal2015dropout,lakshminarayanan2017simple,maddox2019simple}. A possible reason is that the best configurations for BNNs remain unknown. What is the best parameterization, weight prior, approximate posterior, or optimization strategy? The flexibility that accompanies these choices makes BNNs broadly applicable, but adds a high degree of complexity.

Second, maintaining a distribution over weights incurs a significant cost both in additional parameters and runtime complexity. Mean-field variational inference~\cite{blundell2015weight}, for example, requires doubling the existing millions or billions of network weights (i.e., mean and variance for each weight). Using an ensemble of size 5, or 5 MCMC samples, requires 5x the number of weights.
In contrast, simply scaling up a deterministic model to match this parameter count
can lead to much better predictive performance on both in- and out-of-distribution data
\citep{recht2019imagenet}.

In this paper, we develop a flexible distribution over neural network weights that achieves state-of-the-art accuracy and uncertainty while being highly parameter-efficient.  We address the first challenge by building on ideas from deep ensembles \citep{lakshminarayanan2017simple}, which work by aggregating predictions from multiple randomly initialized, stochastic gradient descent (SGD)-trained models.
\citet{fort2019deep} identified that deep ensembles' multimodal solutions
provide uncertainty benefits that are distinct and complementary to distributions
centered around a single mode.

We address the second challenge by leveraging recent work that has identified neural network weights as having low effective dimensionality for sufficiently diverse and accurate predictions.
For example,
\citet{li2018measuring} find that the ``intrinsic'' dimensionality of popular architectures can be on the order of hundreds to a few thousand.
\citet{izmailov2019subspace}
perform Bayesian inference on a learned 5-dimensional subspace.
\citet{wen2019batchensemble} apply ensembling on a rank-1 perturbation of each weight matrix
and obtain strong empirical success
without needing to \textit{learn} the subspace.
\citet{swiatkowski2019ktied} apply singular value decomposition post-training
and observe that a rank of 1-3 captures most of the variational posterior's variance.

\textbf{Contributions.}
We propose a rank-1 parameterization of Bayesian neural nets, where each weight matrix involves only a distribution on a rank-1 subspace. This parameterization addresses the above two challenges.
It also allows us to more efficiently leverage heavy-tailed distributions \citep{louizos2017bayesian}, such as Cauchy, without sacrificing predictive performance.
Finally, we revisit the use of mixture approximate posteriors as a simple strategy for aggregating multimodal weight solutions, similar to deep ensembles. Unlike typical ensembles, however, mixtures on the rank-1 subspace involve a significantly reduced dimensionality (for a mixture of size 10 on ResNet-50, it is only 0.4\% more parameters instead of 900\%). Rank-1 BNNs are thus not only parameter-efficient but also scalable, as Bayesian inference is only done over thousands of dimensions.

\Cref{sec:rank-1} performs an empirical study on the choice of prior, variational posterior, and likelihood formulation.
\Cref{sec:rank-1} also presents a theoretical analysis of the expressiveness of rank-1 distributions. 
\Cref{sec:experiments} shows that, 
on ImageNet with ResNet-50, rank-1 BNNs outperform the original network and BatchEnsemble \citep{wen2019batchensemble} on log-likelihood, accuracy, and calibration on both the test set and ImageNet-C.
On CIFAR-10 and 100 with Wide ResNet 28-10, rank-1 BNNs outperform the original model, Monte Carlo dropout, BatchEnsemble, and original BNNs across log-likelihood, accuracy, and calibration on both the test sets and the corrupted versions, CIFAR-10-C and CIFAR-100-C \citep{hendrycks2019benchmarking}.
Finally, on the MIMIC-III electronic health record (EHR) dataset \citep{johnson2016} with LSTMs, rank-1 BNNs outperform deterministic and stochastic baselines from \citet{dusenberry2019analyzing}.

\vspace{-1.5ex}
\section{Background}
\label{sec:background}

\subsection{Variational inference for Bayesian neural networks}
Bayesian neural networks posit a prior distribution over weights $p(\mathbf{W})$ of a network architecture. 
Given a dataset $(\mathbf{X}, \mathbf{y})$ of $N$ input-output pairs, we perform approximate Bayesian inference using variational inference: we select a family of variational distributions $q(\mathbf{W})$ with free parameters and then minimize the Kullback-Leibler (KL) divergence from $q(\mathbf{W})$ to the true posterior $p(\mathbf{W}\mid \mathbf{X},\mathbf{y})$ \citep{jordan1999introduction}. Taking a minibatch of size $B$, this is equivalent to minimizing the loss function,
\begin{equation*}
-\frac{N}{B} \sum_{b=1}^B \mathbb{E}_{q(\mathbf{W})} [ \log p(y_b\mid \mathbf{x}_b, \mathbf{W}) ] + \operatorname{KL}( q(\mathbf{W}) \| p(\mathbf{W}) ),
\end{equation*}
with respect to the parameters of $q(\mathbf{W})$. This loss function is an upper bound on the negative log-marginal likelihood $-\log p(\mathbf{y}\mid \mathbf{X})$ and can be interpreted as the model's approximate description length \citep{hinton1993keeping}.

In practice, Bayesian neural nets often underfit, mired by complexities in both the choice of prior and approximate posterior, and in stabilizing the training dynamics involved by the loss function (e.g., posterior collapse \citep{bowman2015generating}) and the additional variance from sampling weights to estimate the expected log-likelihood.
In addition, note even the simplest solution of a fully-factorized normal approximation incurs a 2x cost in the typical number of parameters.

\vspace{-1ex}
\subsection{Ensemble \& BatchEnsemble}

Deep ensembles \citep{lakshminarayanan2017simple}
are a simple and effective method for ensembling, where 
one trains multiple copies of a network
and then makes predictions by aggregating 
the individual models to form a mixture distribution.
However, this comes at the cost of training and predicting with multiple copies of network parameters.

BatchEnsemble \citep{wen2019batchensemble} is a parameter-efficient extension that ensembles over a low-rank subspace.
Let the ensemble size be $K$ and, for each layer, denote the original weight matrix $\mathbf{W}\in \mathbb{R}^{m\times d}$, which will be shared across ensemble members.
Each ensemble member $k$ owns a tuple of trainable vectors $\mathbf{r}_k$ and $\mathbf{s}_k$ of size $m$ and $d$ respectively. BatchEnsemble defines $K$ ensemble weights: each is
\begin{equation*}
\mathbf{W}'_k = \mathbf{W} \circ \mathbf{F}_k, \text{ where  } \mathbf{F}_k = \mathbf{r}_k \mathbf{s}_k^{\top} \in \mathbb{R}^{m\times d},
\end{equation*}
and $\circ$ denotes element-wise product.
BatchEnsemble's forward pass can be rewritten, where for a given layer,
\begin{align}
\begin{split}
\mathbf{y} &= \phi \left( {\mathbf{W}_k'} \mathbf{x} \right) 
= \phi \left( \left(\mathbf{W} \circ r_k s_k^\top \right) \mathbf{x} \right) 
\\
&= \phi \left(\left(\mathbf{W}(\mathbf{x} \circ s_k) \right) \circ r_k \right),
\end{split}
\label{eq:vectorization}
\end{align}
where $\phi$ is the activation function, and $\mathbf{x}\in\mathbb{R}^d, \mathbf{y}\in\mathbb{R}^m$ is a single example. In other words, the rank-1 vectors $\mathbf{r}_k$ and $\mathbf{s}_k$ correspond to elementwise multiplication of input neurons and pre-activations. This admits efficient vectorization as we can replace the vectors $\mathbf{x}$, $\mathbf{r}_k$, and $\mathbf{s}_k$ with matrices where each row of $\mathbf{X} \in \mathbb{R}^{B \times d}$ is a batch element and each row of $\mathbf{R} \in \mathbb{R}^{B \times m}$ and $\mathbf{S} \in \mathbb{R}^{B \times d}$ is a choice of ensemble member:
$\phi\left(\left((\mathbf{X} \circ \mathbf{S})\mathbf{W}^\top \right) \circ \mathbf{R} \right)$.
This vectorization extends to other linear operators such as convolution and recurrence.

\vspace{-1.5ex}
\section{Rank-1 Bayesian Neural Nets}
\label{sec:rank-1}
\vspace{-0.5ex}
Building on \Cref{eq:vectorization}, we introduce a rank-1 parameterization of Bayesian neural nets. We then empirically study choices such as the  prior and variational posterior.
\vspace{-1ex}
\subsection{Rank-1 Weight Distributions}

Consider a Bayesian neural net with rank-1 factors: parameterize every $m\times d$ weight matrix $\mathbf{W}' = \mathbf{W} \circ \mathbf{r}\mathbf{s}^\rT$, where the factors $\mathbf{r}$ and $\mathbf{s}$ are $m$ and $d$-vectors respectively.
We place priors on $\mathbf{W}'$ by placing priors on $\mathbf{r}$, $\mathbf{s}$, and $\mathbf{W}$. Upon observing data, we compute for $\mathbf{r}$ and $\mathbf{s}$ (the \textit{rank-1 weight distributions}), while treating $\mathbf{W}$ as deterministic.

\textbf{Variational Inference.}
For training, we apply variational EM where we perform approximate posterior inference over $\mathbf{r}$ and $\mathbf{s}$, and point-estimate the weights $\mathbf{W}$ with maximum likelihood. The loss function is
\begin{align}
\mathcal{L} &=
-\frac{N}{B} \sum_{b=1}^B \mathbb{E}_{q(\mathbf{r})q(\mathbf{s})} [ \log p(y_b\mid \mathbf{x}_b, \mathbf{W}, \mathbf{r}, \mathbf{s}) ]
\label{eq:loss}\\
&\hspace{1em}+ \operatorname{KL}( q(\mathbf{r}) \| p(\mathbf{r}) ) + \operatorname{KL}( q(\mathbf{s}) \| p(\mathbf{s}) ) - \log p(\mathbf{W}),
\nonumber 
\end{align}
where the parameters are $\mathbf{W}$ and the variational parameters of $q(\mathbf{r})$ and $q(\mathbf{s})$. In all experiments, we set the prior $p(\mathbf{W})$ to a zero-mean normal with fixed standard deviation, which is equivalent to an L2 penalty for deterministic models.

Using rank-1 distributions enables significant variance reduction: weight sampling only comes from the rank-1 variational distributions rather than over the full weight matrices (tens of thousands compared to millions). In addition, 
\Cref{eq:vectorization} holds, enabling sampling of new $\mathbf{r}$ and $\mathbf{s}$ vectors for each example and for arbitrary distributions $q(\mathbf{r})$ and $q(\mathbf{s})$.

\textbf{Multiplicative or Additive Perturbation?}
A natural question is whether to use a multiplicative or additive update. For location-scale family distributions, multiplication and addition only differ in the location parameter and are invariant under a scale reparameterization.
For example: let $r_i\sim \operatorname{Normal}(\mu, \sigma^2)$ and for simplicity, ignore $\mathbf{s}$; then
\begin{equation*}
w_{ij}r_i = w_{ij}(\mu_i + \sigma_i\epsilon_i) = w_{ij}\mu_i + r_i',
\end{equation*}
where $r_i'\sim\operatorname{Normal}(0, \sigma_i'^2)$ and $\sigma_i'=w_{ij}\sigma_i$. Therefore additive perturbations only differ in an additive location parameter ($+x\circ s\circ r$). An additive location is often redundant as, when vectorized under \Cref{eq:vectorization}, it's subsumed by any biases and skip connections.

\vspace{-1ex}
\subsection{Rank-1 Priors Are Hierarchical Priors}
\label{sub:hierarchical}

\begin{figure}[t]
\vspace{-1ex}
\centering
\includegraphics[width=0.8\columnwidth]{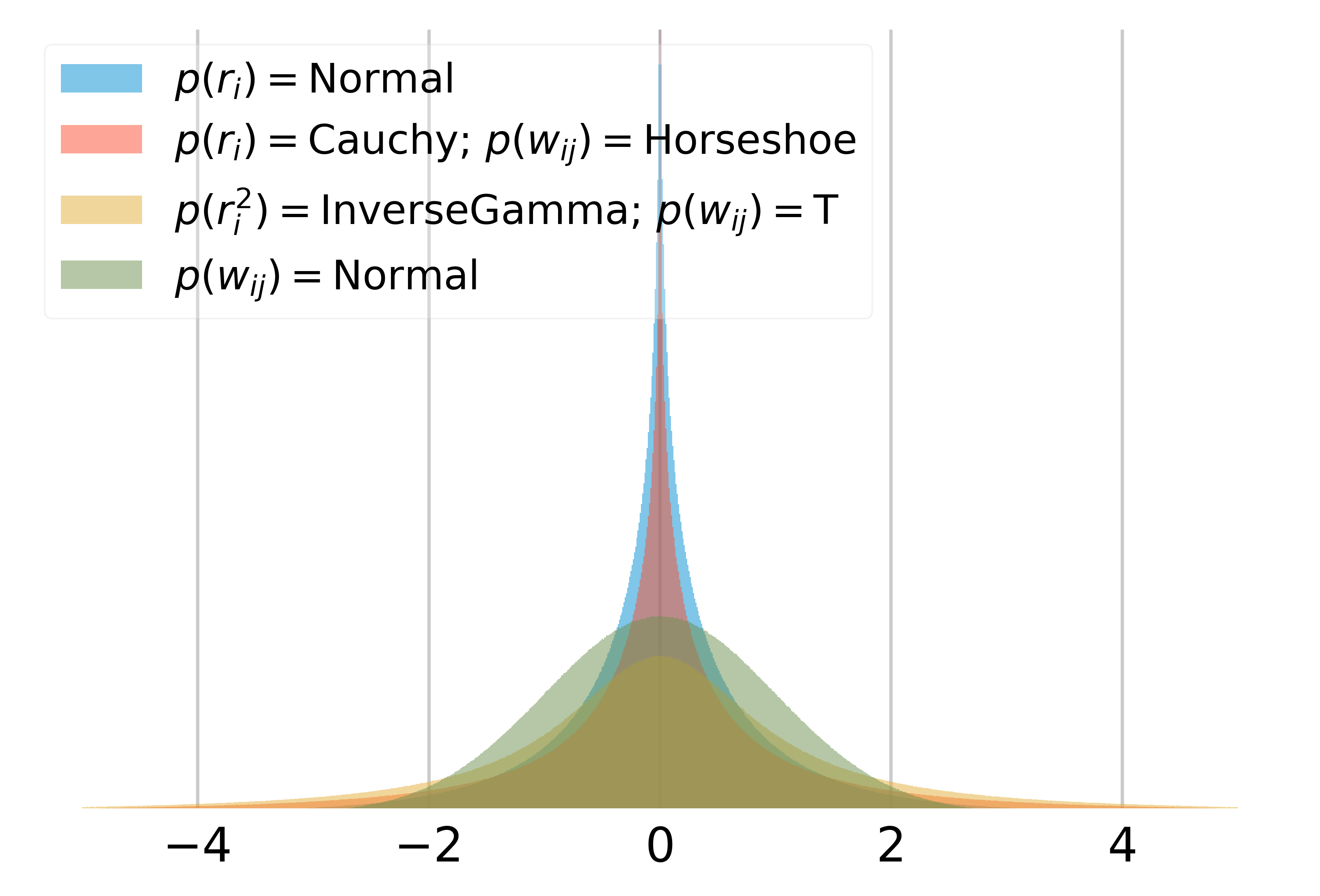}
\caption{Induced weight priors. The distribution of a weight element is ${w}'_{ij} = w_{ij} r_i s_j$, where $w_{ij}\sim\mathcal{N}(0, \cdot)$, $s_j$ is fixed at $1$, and $r_i$ is varied. \blue{Normal} and \red{Cauchy} priors on $r_i$ both encourage sparse weight posteriors: Cauchy has less mass around 0 and heavier tails. \orange{Inverse-Gamma} $r_i^2$ induces a Student-T weight prior unlike a \green{normal weight prior}.
}
\vspace{-1ex}
\label{fig:weights}
\end{figure}

Priors over the rank-1 factors can be viewed as hierarchical priors on the weights in a noncentered parameterization, that is, where the distributions on the weights and scale factors are independent. This removes posterior correlations between the weights which can be otherwise difficult to approximate \citep{ingraham2017variational,louizos2017bayesian}.
We examine choices for priors based on this connection.

\begin{figure}[t]
\centering
\includegraphics[width=\columnwidth]{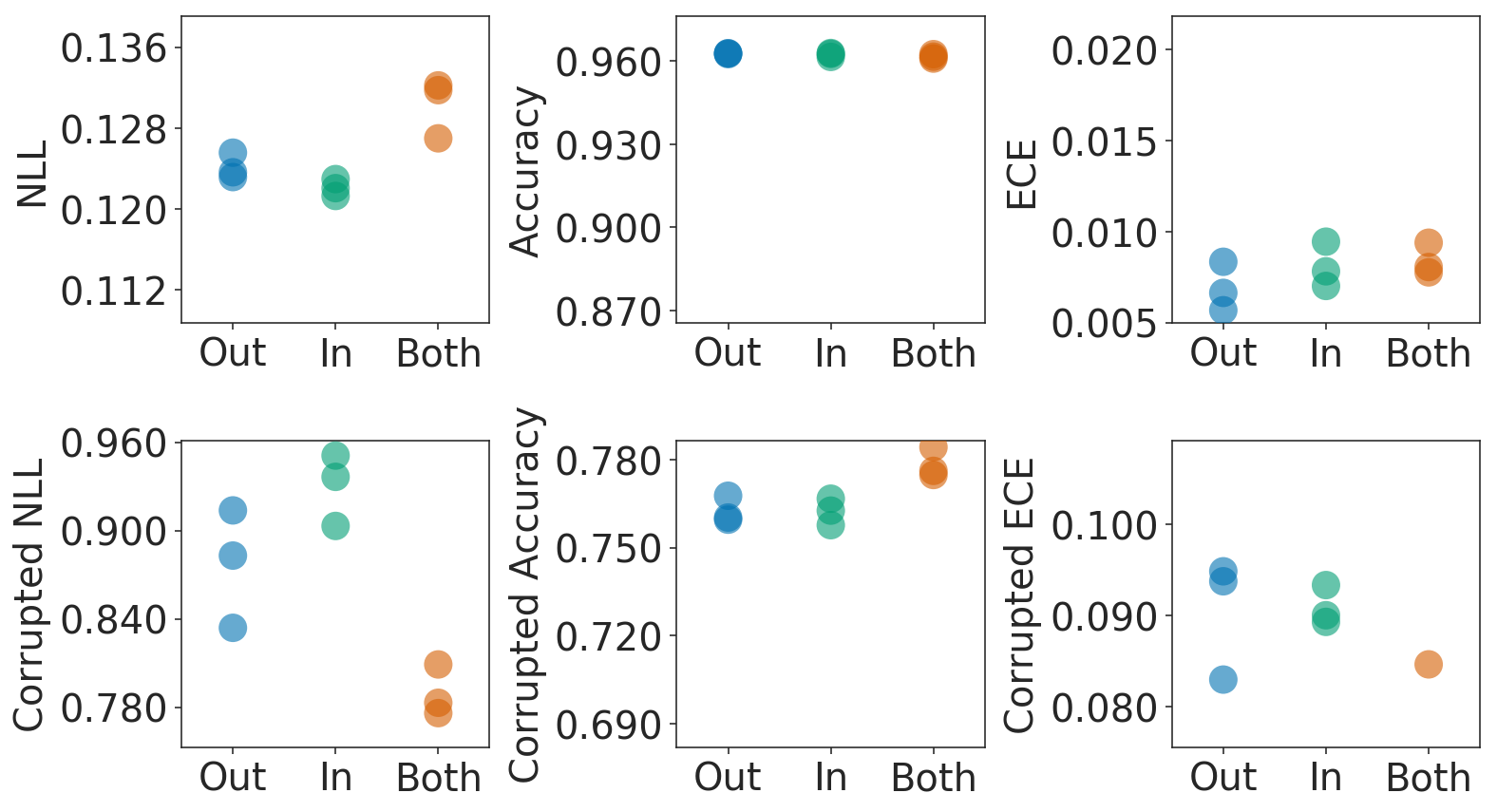}
\caption{Placing distributions over $\mathbf{r}$ (\blue{output}), $\mathbf{s}$ (\green{input}), and \orange{both}, evaluated over three runs on the CIFAR-10 test set and CIFAR-10-C.
The best setup differs on the test set, while priors over both vectors generalize better on corruptions.}
\label{fig:hierarchical}
\vspace{-2ex}
\end{figure}

\textbf{Hierarchy across both input and output neurons}.
Typical hierarchical priors for BNNs are Gaussian-scale mixtures, which take the form
\begin{equation*}
p(\mathbf{W}') = \int \mathcal{N}(\mathbf{W}'\mid 0, \mathbf{r}^2\sigma^2) p(\mathbf{r})p(\sigma^2)~\text{d}\mathbf{r}\,\text{d}\sigma^2,
\end{equation*}
where $\mathbf{r}$ is a vector shared across rows or columns and $\sigma$ is a global scale across all elements. Settings of $\mathbf{r}$ and $\sigma$ lead to well-known distributions (\Cref{fig:weights}): Inverse-Gamma variance induces a Student-t distribution on $\mathbf{W'}$; half-Cauchy scale induces a horseshoe distribution \citep{carvalho2009handling}.
For rank-1 priors, the induced weight distribution is
\begin{equation}
p(\mathbf{W}') = \iint \mathcal{N}(\mathbf{W}'\mid 0, (\mathbf{r}\mathbf{s}^\rT\sigma)^2) p(\mathbf{r}) p(\mathbf{s})~\text{d}\mathbf{r}\,\text{d}\mathbf{s},
\label{eq:rank1-weight-prior}
\end{equation}
where $\mathbf{r}$ is a vector shared across columns; $\mathbf{s}$ is a vector shared across rows; and $\sigma$ is a scalar hyperparameter.

To better understand the importance of hierarchy, \Cref{fig:hierarchical} examines three settings under the best model on CIFAR-10 (\Cref{subsec:experiments-cifar10}): priors (paired with non-degenerate posteriors) on (1) only the vector $\mathbf{s}$ that is applied to the layer's inputs, (2) only the vector $\mathbf{r}$ that is applied to the outputs, and (3) the default of both $\mathbf{s}$ and $\mathbf{r}$.
The presence of a prior corresponds to a mixture of Gaussians with tuned, shared mean and standard deviation,
and the corresponding approximate posterior is a mixture of Gaussians with learnable parameters; the absence of a prior indicates point-wise estimation. L2 regularization on the point-estimated $\mathbf{W}$ is also tuned. 

Looking at test performance, we find that the settings perform comparably on accuracy and differ slightly on test NLL and ECE. More interestingly, when we look at the corruptions task, the hierarchy of priors across \textit{both} vectors outperforms the others on all three metrics, suggesting improved generalization. We hypothesize that the ability to modulate the uncertainty of both the inputs and outputs of each layer assists in handling distribution shift.

\textbf{Cauchy priors: Heavy-tailed real-valued priors.}
Weakly informative priors such as the Cauchy are often preferred for robustness as they concentrate less probability at the mean thanks to heavier tails \citep{gelman2006prior}.
The heavy tails encourage the activation distributions to be farther apart at training time, reducing the mismatch when passed out-of-distribution inputs.
However, the exploration of heavy-tailed priors has been mostly limited to half-Cauchy \citep{carvalho2010horseshoe} and log-uniform priors \citep{kingma2015variational} on the scale parameters,
and there has been a lack of empirical success beyond compression tasks. These priors are often justified by the assumption of a positive support for scale distributions.
However, in a non-centered parametrization, such restriction on the support is unnecessary and we find that real-valued scale priors typically outperform positive-valued ones (\Cref{sub:real}).
Motivated by this, we explore in \Cref{sec:experiments} the improved generalization and uncertainty calibration provided by Cauchy rank-1 priors.

\vspace{-1ex}
\subsection{Choice of Variational Posterior}

\textbf{Role of Mixture Distributions.}
Rank-1 BNNs admit few stochastic dimensions, making mixture distributions over weights more feasible to scale. For example, a mixture approximate posterior with $K=10$ components for ResNet-50 results in an 0.4\% increase in parameters, compared with a 900\% increase in deep ensembles. A natural question is: to what extent can we scale $K$ before there are diminishing returns? \Cref{fig:num_models} examines the best-performing rank-1 model under our CIFAR-10 setup, varying the mixture size $K\in\{1,2, 4, 8, 16\}$. For each, we tune over the total number of training epochs, and measure NLL, accuracy, and ECE on both the test set and CIFAR-10-C corruptions dataset. As the number of mixture components increases from 1 to 8, the performance across all metrics increases. At $K=16$, however, there is a decline in performance. Based on our findings, all experiments in \Cref{sec:experiments} use $K=4$.

For mixture size $K=16$, we suspect the performance is a
result of the training method and hardware memory constraints.
Namely, we start with a batch of $B$ examples and duplicate it $K$ times so that each mixture component applies a forward pass for each example; the total batch size supplied to the model is $B\cdot K$.
We keep this total batch size constant as we increase $K$ in order to maintain constant memory. This implies that as the number of mixture components increases, the batch size $B$ of new data points decreases. We suspect alternative implementations such as sampling mixture components may enable further scaling.

\textbf{Role of Non-Degenerate Components.}
To understand the role of non-degenerate distributions (i.e., distributions that do not have all probability mass at a single point), note that BatchEnsemble can be interpreted as using a mixture of Dirac delta components.
\Cref{sec:experiments} compares to BatchEnsemble in depth, providing broad evidence that mixtures consistently improve results (particularly accuracy), and using non-degenerate components further lowers probabilistic metrics (NLL and ECE) as well as improves generalization to out-of-distribution examples.

\begin{figure}[!t]
\centering
\includegraphics[width=\columnwidth]{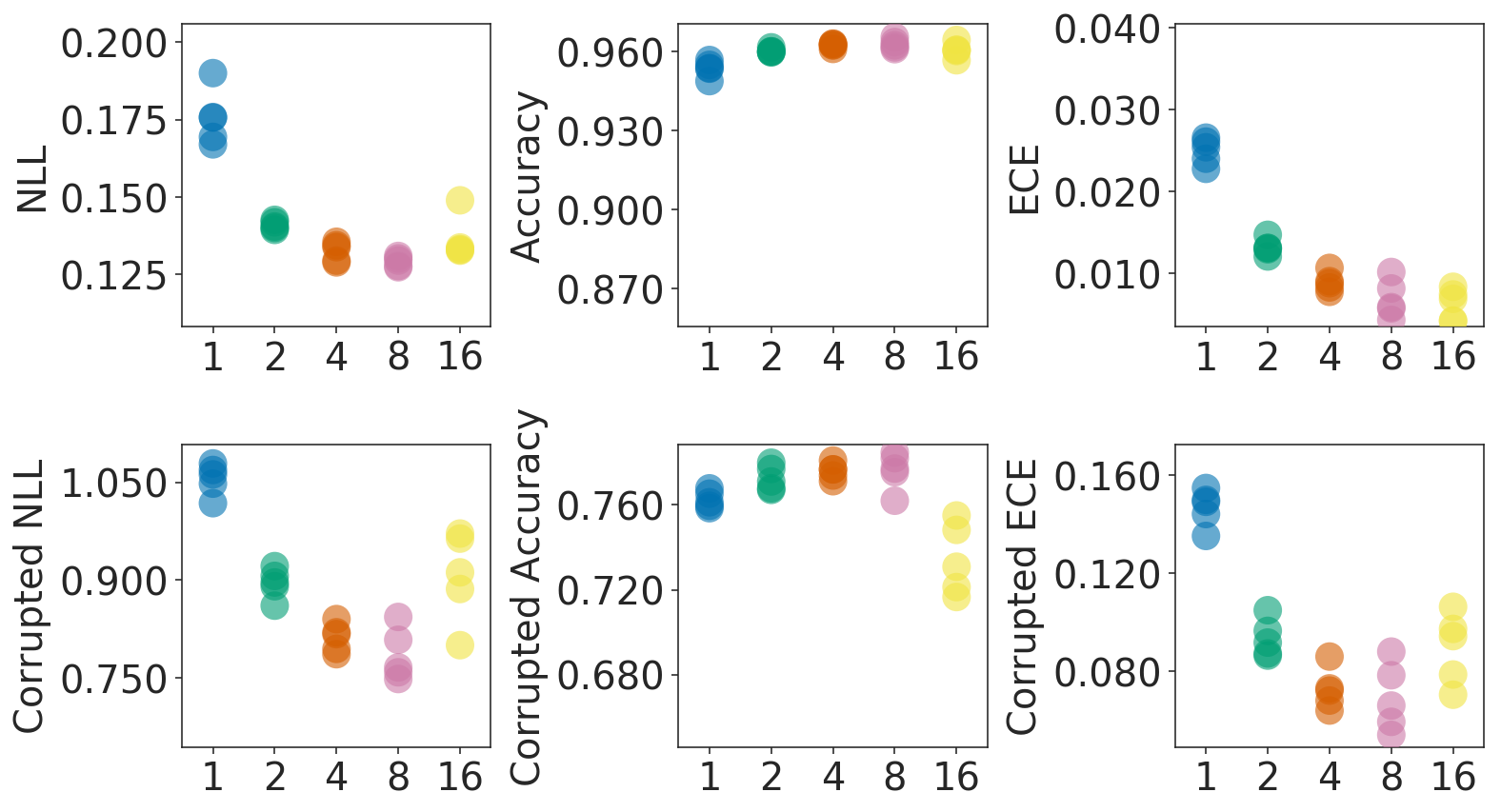}
\vspace{-2ex}
\caption{Varying the number of mixture components in the rank-1 mixture of Gaussians posteriors, evaluated over five runs on the CIFAR-10 test set and CIFAR-10-C corrupted dataset. Increasing the number of components yields improved performance up to a limit.
}
\label{fig:num_models}
\vspace{-4ex}
\end{figure}

\vspace{-1ex}
\subsection{Log-likelihood: Mixture or Average?}
\label{sub:likelihood}

When using mixture distributions as the approximate posterior, the expected log-likelihood in \Cref{eq:loss} involves an average over all mixture components.
By Jensen's inequality, one can get a tighter bound on the log-marginal likelihood by using the log-mixture density,
\begin{align*}
\log \frac{1}{K}\sum_{k=1}^K p(y_n\mid \mathbf{x}_n, \theta_k) 
&\geq
\frac{1}{K} \sum_{k=1}^K  \log p(y_n\mid \mathbf{x}_n, \theta_k),
\end{align*}
where $\theta_k$ are per-component parameters.
The log-mixture likelihood is typically preferred over the average as it is guaranteed to provide at least as good a bound on the log-marginal. Appendix \ref{sec:nll} contains a further derivation of the various choices of log-likelihood losses for such models.

However, deep ensembles when interpreted as a mixture distribution correspond to using the average as the loss function: for the gradient of parameters $\theta_{k'}$ in mixture component $k'$,
\begin{align*}
\nabla_{\theta_{k'}}\log \frac{1}{K}\sum_{k=1}^K p(y\mid \mathbf{x}, \theta_k) 
&= \frac{\nabla p(y\mid \mathbf{x}, \theta_{k'})}{K^{-1}\sum_{k=1}^K p(y\mid \mathbf{x}, \theta_k)}
\\
\nabla_{\theta_{k'}} \frac{1}{K} \sum_{k=1}^K  \log p(y\mid \mathbf{x}, \theta_k)
&=
\frac{1}{K} \frac{1}{\nabla p(y\mid \mathbf{x}, \theta_{k'})}.
\end{align*}
Therefore, while the log-mixture likelihood is an upper bound, it incurs a communication cost where each mixture component's gradients are a function of how well the other mixture components fit the data.
This communication cost prohibits the use of log-mixture likelihood as a loss function for deep ensembles,
where randomly initialized ensemble members are trained independently.

We wonder whether deep ensembles' lack of communication across mixture components and relying purely on random seeds for diverse solutions is in fact better.
With rank-1 priors, we can do either with no extra cost: \Cref{fig:mixture} compares the two using the best rank-1 BNN hyperparameters on CIFAR-10.
Note that we always use the log-mixture likelihood for evaluation.
While the training metrics in \Cref{fig:mixture} are comparable, the log-mixture likelihood generalizes worse than the average log-likelihood,
and the individual mixture components also generalize worse.
It seems that, at least for misspecified models such as overparametrized neural networks, training a looser bound on the log-likelihood
leads to improved predictive performance. We conjecture that this might simply be a case of ease of optimization allowing the model to explore more distinct modes throughout the training procedure.

\begin{figure}[!t]
\vspace{-1ex}
\centering
\begin{subfigure}[t]{0.32\columnwidth}
\centering
\includegraphics[width=\columnwidth]{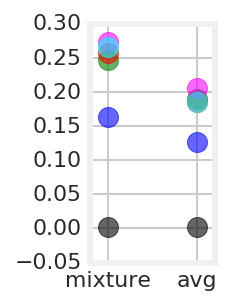}
\caption{NLL}
\end{subfigure}\hfill
\begin{subfigure}[t]{0.29\columnwidth}
\centering
\includegraphics[width=\columnwidth]{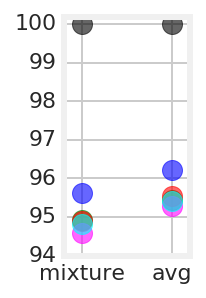}
\caption{Accuracy}
\end{subfigure}\hfill
\begin{subfigure}[t]{0.32\columnwidth}
\centering
\includegraphics[width=\columnwidth]{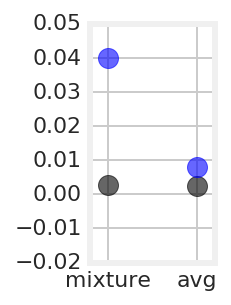}
\caption{ECE}
\end{subfigure}
\vspace{-1ex}
\caption{Training with a log-mixture likelihood vs an average per-component log-likelihood.
\blue{Blue} is averaged (test) performance; \red{co}\green{lo}\sky{r}\textcolor{magenta}{s} are individual components; \textbf{black} is averaged (train) performance. Training metrics are identical but the average consistently outperforms on the test set.
}
\label{fig:mixture}
\vspace{-3.5ex}
\end{figure}

\vspace{-1ex}
\subsection{Ensemble Diversity}
The diversity of predictions returned by different members of an ensemble is an important indicator of the quality of uncertainty quantification \citep{fort2019deep} and of the robustness of the ensemble \citep{pang2019improving}. Following \citet{fort2019deep}, \Cref{fig:diversity} examines the disagreement of rank-1 BNNs and BatchEnsemble members against accuracy and log-likelihood, on test data.

We quantify diversity by the fraction of points where discrete predictions differ between two members, averaged over all pairs.
This disagreement measure is normalized 
by $(1 - \operatorname{acc})$ to account for the fact that the lower the accuracy of a member, the more random its predictions can be. Unsurprisingly, \Cref{fig:diversity} demonstrates a negative correlation between accuracy and diversity for both methods. For the same or higher predictive performance, rank-1 BNNs achieve a higher degree of ensemble diversity than BatchEnsemble on both CIFAR-10 and CIFAR-100.

This can be attributed to the non-degenerate posterior distribution around each mode of the mixture, which can better handle modes that are closest together. In fact, a deterministic mixture model could place multiple modes within a single valley in the loss landscape parametrized by weights. Accordingly, the ensemble members are likely to collapse on near-identical modes in the function space. On the other hand, a mixture model that can capture the uncertainty around each mode might be able to detect a single `wide' mode, as characterized by large variance around the mean.
Overall, the improved diversity result confirms our intuition about the necessity of combining local (near-mode) uncertainty with a multimodal representation in order to improve the predictive performance of mode averaging.

\begin{figure}[!t]
\vspace{-1ex}
\includegraphics[width=\columnwidth]{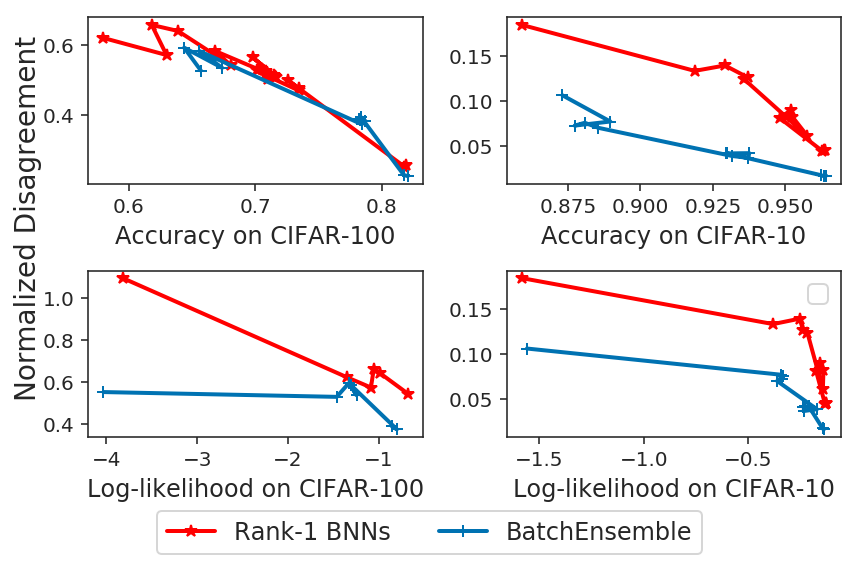}
\vspace{-4ex}
\caption{Disagreement versus accuracy and log-likelihood over consecutive model checkpoints, at the end of training, for rank-1 BNNs and BatchEnsemble on CIFAR-10/100. Rank-1 BNNs demonstrate a higher diversity while achieving better predictive performance than BatchEnsemble.}
\vspace{-2.5ex}
\label{fig:diversity}
\end{figure}

\vspace{-1ex}
\subsection{Expressiveness of Rank-1 Distribution}
A natural question is how expressive a rank-1 distribution is. \Cref{theorem:thm_informal} below demonstrates that the rank-1 perturbation encodes a wide range of perturbations in the original weight matrix $\mathbf{W}$.
We prove that, for a fully connected neural network, the rank-1 parameterization has the same local variance structure in the score function as a full-rank's. 
\begin{theorem}[Informal]
\label{theorem:thm_informal}
In a fully connected neural network of any width and depth, let $\mathbf{W}_*$ denote a local minimum associated with a score function over a dataset.
Assume that the full-rank perturbation on the weight matrix in layer $h$ has the multiplicative covariance structure that 
\begin{align*}
&\Ep{\mathbf{W}^{(h)}}{\lrp{\mathbf{W}^{(h)}-\mathbf{W}^{(h)}_*}_{i,j} \lrp{\mathbf{W}^{(h)}-\mathbf{W}^{(h)}_*}_{k,l}}  \\
&= {\mathbf{W}^{(h)}_*}_{i,j} \mathbf{\Sigma}_{j,k} {\mathbf{W}^{(h)}_*}_{k,l},
\end{align*}
for some symmetric positive semi-definite matrix $\mathbf{\Sigma}$.
Let $\mathbf{s}_*^{(h)}$ denote a column vector of ones.
Then if the rank-1 perturbation has covariance 
\begin{align*}
\Ep{\mathbf{s}^{(h)}}{ \lrw{\lrp{\mathbf{s}^{(h)}-\mathbf{s}^{(h)}_*} \lrp{\mathbf{s}^{(h)}-\mathbf{s}^{(h)}_*}}^\rT }
= \mathbf{\Sigma},     
\end{align*}
the score function has the same variance around the local minimum.
\vspace{-1ex}
\end{theorem}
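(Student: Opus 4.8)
The plan is to exploit that, in a feedforward network, the layer-$h$ weights affect the score function only through a fixed deterministic map, so that two perturbation schemes for $\mathbf{W}^{(h)}$ inducing the same first two moments of the perturbation $\mathbf{W}^{(h)}-\mathbf{W}^{(h)}_*$ give the score the same second-order (local) behaviour around $\mathbf{W}^{(h)}_*$; then all that remains is to check that the rank-1 perturbation of $\mathbf{s}^{(h)}$ reproduces the assumed multiplicative covariance on $\mathbf{W}^{(h)}$.

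First I would reduce to a single layer. Fix an input $\mathbf{x}_n$; since only layer $h$ carries randomness, its input $\mathbf{a}_n^{(h-1)}$ is a constant vector, the pre-activations are $\mathbf{z}_n^{(h)}=\mathbf{W}^{(h)}\mathbf{a}_n^{(h-1)}$ in the full-rank case and $\mathbf{z}_n^{(h)}=\mathbf{W}^{(h)}_*(\mathbf{a}_n^{(h-1)}\circ\mathbf{s}^{(h)})$ in the rank-1 case (with $\mathbf{r}^{(h)}$ held at its base value $\mathbf{1}$), and the remaining layers together with the loss form a fixed smooth map $g_n$ of $\mathbf{z}_n^{(h)}$. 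Hence the dataset score $\mathcal{S}=\frac1N\sum_n g_n(\mathbf{z}_n^{(h)})$ is a deterministic function of $\mathbf{W}^{(h)}$, and it suffices to compare the law of $\mathbf{D}^{(h)}:=\mathbf{W}^{(h)}-\mathbf{W}^{(h)}_*$ under the two schemes. For the rank-1 scheme, $\mathbf{s}^{(h)}_*=\mathbf{1}$ gives $\mathbf{W}^{(h)}=\mathbf{W}^{(h)}_*\operatorname{diag}(\mathbf{s}^{(h)})$, so $\mathbf{D}^{(h)}_{i,j}={\mathbf{W}^{(h)}_*}_{i,j}\,(s^{(h)}_j-1)$; since $\mathbb{E}[\mathbf{s}^{(h)}-\mathbf{s}^{(h)}_*]=\mathbf{0}$ this has mean zero, and
\begin{align*}
\mathbb{E}\!\left[\mathbf{D}^{(h)}_{i,j}\mathbf{D}^{(h)}_{k,l}\right]
&={\mathbf{W}^{(h)}_*}_{i,j}\,{\mathbf{W}^{(h)}_*}_{k,l}\;\mathbb{E}\!\left[(s^{(h)}_j-1)(s^{(h)}_l-1)\right]\\
&={\mathbf{W}^{(h)}_*}_{i,j}\;\mathbf{\Sigma}_{j,l}\;{\mathbf{W}^{(h)}_*}_{k,l},
\end{align*}
which is exactly the multiplicative covariance structure assumed for the full-rank perturbation (after the harmless relabeling of the summed column index). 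Equivalently, that hypothesis says precisely that the second-moment structure of the full-rank perturbation coincides with that of a random per-column rescaling of $\mathbf{W}^{(h)}_*$ with covariance $\mathbf{\Sigma}$, which is what the rank-1 $\mathbf{s}^{(h)}$-perturbation realizes.

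Next I would propagate this forward. Because $\mathbf{z}_n^{(h)}=\mathbf{W}^{(h)}_*\mathbf{a}_n^{(h-1)}+\mathbf{D}^{(h)}\mathbf{a}_n^{(h-1)}$ is affine in $\mathbf{D}^{(h)}$, the mean of $\mathbf{z}_n^{(h)}$ and the (cross-example) covariances $\operatorname{Cov}[\mathbf{z}_n^{(h)},\mathbf{z}_{n'}^{(h)}]$ are fixed linear functionals of $\mathbb{E}[\mathbf{D}^{(h)}]$ and $\mathbb{E}[\mathbf{D}^{(h)}\otimes\mathbf{D}^{(h)}]$, hence coincide across the two schemes. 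Consequently any statistic that depends on $\mathbf{W}^{(h)}$ only through $\{\mathbf{z}_n^{(h)}\}_n$ has matching first two moments to first order in the perturbation; in particular the score $\nabla_{\mathbf{W}^{(h)}}\mathcal{S}$, which vanishes at the local minimum $\mathbf{W}^{(h)}_*$ and to first order equals the Hessian $\mathbf{H}$ applied to $\operatorname{vec}(\mathbf{D}^{(h)})$, has covariance $\mathbf{H}\,\operatorname{Cov}[\operatorname{vec}(\mathbf{D}^{(h)})]\,\mathbf{H}^\top$ — identical under the two parameterizations. This is the asserted equality of the local variance structure of the score. If one additionally assumes both perturbations Gaussian, the matched first two moments pin down the entire law of $\mathbf{D}^{(h)}$, so the equality is exact and holds for the full distribution of the score, not merely its covariance.

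The step I expect to be the crux is not any single calculation — each is a one-line moment computation — but fixing the precise meaning of ``the score function has the same variance around the local minimum'' and being careful about the order at which it holds. Because the post-layer-$h$ map $g_n$ is nonlinear, matching only the first two moments of $\mathbf{D}^{(h)}$ controls the covariance of the (first-order) score exactly, but controls e.g.\ the scalar variance of the loss itself only up to fourth-moment terms; the theorem should therefore be read as the statement about the local second-moment (``variance'') structure, which is exact under a Gaussian perturbation assumption and otherwise holds to leading order. The ``fully connected, any width and depth'' hypothesis enters only to guarantee that the weights of layer $h$ influence the score solely through $\{\mathbf{z}_n^{(h)}\}_n$, which is exactly what justifies the reduction in the first step.
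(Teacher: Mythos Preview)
Your argument is correct and in fact slightly more economical than the paper's. The paper's formal proof (Appendix~A) computes the two Hessian quadratic forms separately: it derives closed-form expressions for $\nabla_{\mathbf{W}^{(h)}}^2 f$ and $\nabla_{\mathbf{s}^{(h)}}^2 f$ via backpropagation through the feedforward recursion, rewrites each expected quadratic form as a trace, and then shows that under the multiplicative covariance assumption the two traces have the same inner matrix factor, namely $\mathbf{W}^{(h)}_*\operatorname{diag}(\mathbf{x}_n^{(h-1)})\,\mathbf{\Sigma}\,\operatorname{diag}(\mathbf{x}_n^{(h-1)})(\mathbf{W}^{(h)}_*)^{\rT}$. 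You bypass this by observing that the map $\mathbf{s}^{(h)}\mapsto \mathbf{W}^{(h)}_*\operatorname{diag}(\mathbf{s}^{(h)})$ is linear, so the $\mathbf{s}$-Hessian quadratic form is literally the $\mathbf{W}$-Hessian quadratic form evaluated at the induced perturbation $\mathbf{D}^{(h)}$, and hence matching second moments of $\mathbf{D}^{(h)}$ in $\mathbf{W}$-space already suffices. This buys you freedom from the explicit Hessian formulas; the paper's route buys an explicit lemma that could be reused elsewhere.

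Two small alignments with the paper are worth making. First, the paper's ``score function'' is the network output $f(\mathbf{x}\mid\mathbf{W})$, not its gradient; the formal conclusion is equality of the expected second-order Taylor terms $\mathbb{E}\bigl[\langle\Delta\mathbf{W}^{(h)},\nabla_{\mathbf{W}^{(h)}}^2 f\,\Delta\mathbf{W}^{(h)}\rangle_F\bigr]$ and $\mathbb{E}\bigl[\langle\Delta\mathbf{s}^{(h)},\nabla_{\mathbf{s}^{(h)}}^2 f\,\Delta\mathbf{s}^{(h)}\rangle\bigr]$, i.e.\ the expected fluctuation of the function value, rather than the covariance of the gradient you compute. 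Your second-moment matching of $\mathbf{D}^{(h)}$ delivers both, so this is only a matter of stating the right target. Second, your ``harmless relabeling'' of the $\mathbf{\Sigma}$ index is indeed harmless: the paper's own derivation in fact requires $\mathbf{\Sigma}_{j,l}$ (coupling the two input-side indices), consistent with $\mathbf{\Sigma}=\mathbb{E}[(\mathbf{s}-\mathbf{1})(\mathbf{s}-\mathbf{1})^{\rT}]$ being a $d\times d$ symmetric PSD matrix, so the stated $\mathbf{\Sigma}_{j,k}$ is a typo rather than a substantive difference.
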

\Cref{theorem:thm_informal} demonstrates a correspondence between the covariance structure in the perturbation of $\mathbf{W}$ and that of $\mathbf{s}$.
Since $\mathbf{\Sigma}$ can be any symmetric positive semi-definite matrix, our rank-1 parameterization can efficiently encode a wide range of fluctuations in $\mathbf{W}$.
In particular, it is especially suited for multiplicative noise as advertised.
If the covariance of $\lrp{\mathbf{W}-\mathbf{W}_*}$ is proportional to $\mathbf{W}_* \otimes \mathbf{W}_*^\rT$ itself,
then we can simply take the covariance of $\lrp{\mathbf{s}-\mathbf{s}_*}$ to be identity.
See \Cref{appnd:thm} for a formal version of \Cref{theorem:thm_informal}.

\vspace{-1.5ex}
\section{Experiments}
\label{sec:experiments}
In this section, we show results on image classification and electronic health record classification tasks: ImageNet, CIFAR-10, CIFAR-100, their corrupted variants \citep{hendrycks2019benchmarking}, and binary mortality prediction with the MIMIC-III EHR dataset \citep{johnson2016}.
For ImageNet, we use a ResNet-50 baseline as it's the most commonly benchmarked model \citep{he2016deep}.
For CIFAR, we use a Wide ResNet 28-10 baseline as it's a simple architecture that achieves 95\%+ test accuracy on CIFAR-10
with little data augmentation
\citep{zagoruyko2016wide}.
For MIMIC-III, we use recurrent neural networks (RNNs) based on the setup in \citet{dusenberry2019analyzing}.

\textbf{Baselines.} For the image classification tasks, we reproduce and compare to baselines with equal parameter count: deterministic (original network);
Monte Carlo dropout \citep{gal2015dropout}; and BatchEnsemble \citep{wen2019batchensemble}. Although 2x the parameter count of other methods, we also tune a vanilla BNN baseline for CIFAR that uses Gaussian priors and approximate posteriors over the full set of weights with Flipout \citep{wen2018flipout} for estimating expectations. We additionally include reproduced results for two deep ensemble \citep{lakshminarayanan2017simple} setups: one with an equal parameter count for the entire ensemble, and one with $K$ times more parameters for an ensemble of $K$ members.

For the EHR task, we reproduce and compare to the LSTM-based RNN baselines from \citet{dusenberry2019analyzing}: deterministic; Bayesian Embeddings (distributions over the embeddings); and Fully Bayesian (distributions over all parameters). We additionally compare against BatchEnsemble, and include reproduced results for deep ensembles.

\vspace{-1ex}
\subsection{ImageNet and ImageNet-C}

ImageNet-C \citep{hendrycks2019benchmarking} applies a set of 15 common visual corruptions to ImageNet \citep{Deng2009ImageNetAL} with varying intensity values (1-5). It was designed to benchmark the robustness to image corruptions. \Cref{table:imagenet} presents results for negative log-likelihood (NLL), accuracy, and expected calibration error (ECE) on the standard ImageNet test set, as well as on ImageNet-C.
We also include mean corruption error (mCE)
\citep{hendrycks2019benchmarking}. \Cref{fig:imagenetc} examines out-of-distribution performance in more detail by plotting the mean result across corruption types for each corruption intensity.

BatchEnsemble improves accuracy (but not NLL or ECE) over the deterministic baseline. Rank-1 BNNs, which involve non-degenerate mixture distributions 
over BatchEnsemble, further improve results across all metrics.

Rank-1 BNN's results are comparable in terms of test NLL and accuracy to previous works which scaled up BNNs to ResNet-50. \citet{zhang2019cyclical} use 9 MCMC samples and
report 77.1\% accuracy and 0.888 NLL; and \citet{heek2019bayesian} use 30 MCMC samples and report 77.5\% accuracy and 0.883 NLL.
Rank-1 BNNs have a similar parameter count to deterministic ResNet-50, instead of incurring a 9-30x memory cost, and use a single MC sample from each mixture component by default.\footnote{
\citet{heek2019bayesian} also report results using a single sample: 74.2\% accuracy, 1.08 NLL. Rank-1 BNNs outperform.}
Rank-1 BNNs also do not use techniques such as tempering, which trades off uncertainty
in favor of predictive performance. We predict rank-1 BNNs may outperform these methods if measured by ECE or out-of-distribution performance.

\subsection{CIFAR-10 and CIFAR-10-C}
\label{subsec:experiments-cifar10}

\Cref{table:cifar10} demonstrates results with respect to NLL, accuracy, and ECE on the CIFAR-10 test set, and the same three metrics on CIFAR-10-C. \Cref{fig:cifar100c} examines out-of-distribution performance as the skew intensity (severity of corruption) increases.
\Cref{sec:cifar10-c} contains a clearer comparison.

On CIFAR-10, both Gaussian and Cauchy rank-1 BNNs outperform similarly-sized baselines in terms of NLL, accuracy, and ECE. 
The improvement on NLL and ECE is more significant than that on accuracy, which highlights the improved uncertainty measurement.
An even more significant improvement is observed on CIFAR-10-C: the NLL improvement from BatchEnsemble is 1.02 to 0.74; accuracy increases by 3.7\%; and calibration decreases by 0.05. This, in addition to \Cref{fig:cifar10c-results} in the Appendix, is clear evidence of improved generalization and uncertainty calibration for rank-1 BNNs, even under distribution shift.

The vanilla BNN baseline underfits compared to the deterministic baseline, despite an extensive search over hyperparameters.
We suspect this is a result of the difficulty of optimization given weight variance
and overregularization due to priors over all weights. 
Rank-1 BNNs do not face these issues and consistently outperform vanilla BNNs.

In comparison to deep ensembles \citep{lakshminarayanan2017simple}, rank-1 BNNs outperform the similarly-sized ensembles on accuracy, while only underperforming deep ensembles that have 4 times the number of parameters. 
Rank-1 BNNs still perform better on in-distribution ECE, as well as on accuracy and NLL under distribution shift.

Rank-1 BNN's results are similar to SWAG \citep{maddox2019simple} and Subspace Inference \citep{izmailov2019subspace} despite those having a significantly stronger deterministic baseline and 5-25x parameters:
SWAG gets 96.4\% accuracy, 0.112 NLL, 0.009 ECE; 
Subspace Inference gets 96.3\% accuracy, 0.108 NLL, and does not report ECE; their deterministic baseline gets 96.4\% accuracy, 0.129 NLL, 0.017 ECE (vs. our 96.0\%, 0.159, 0.023). They don't report out-of-distribution performance. 
Rank-1 outperforms on accuracy and underperforms on NLL.

\begin{figure}[!t]
\vspace{-1.5ex}
\centering
\includegraphics[width=\columnwidth]{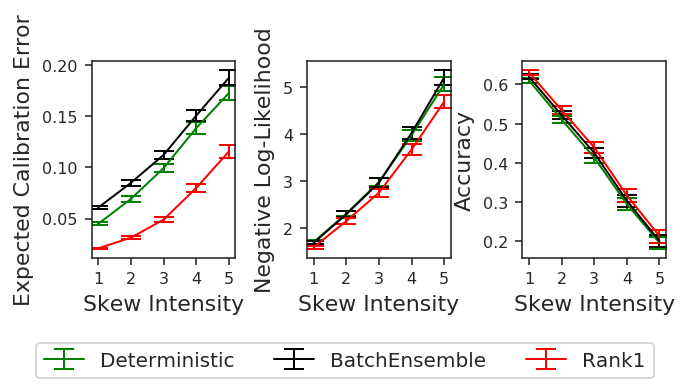}
\vspace{-4ex}
\caption{Out-of-distribution performance using ImageNet-C with ResNet-50. We plot NLL, accuracy, and ECE for varying corruption intensities; each result is the mean performance over 10 runs and over 15 corruption types. The error bars represent the standard deviation across corruption types. \Cref{fig:imagenet-c-results} elaborates on these results in the Appendix. Rank-1 BNNs (\red{red}) perform best across all metrics.}
\label{fig:imagenetc}
\vspace{-3ex}
\end{figure}

\begin{figure}[t]
\vspace{-1.5ex}
\centering
\includegraphics[width=\columnwidth]{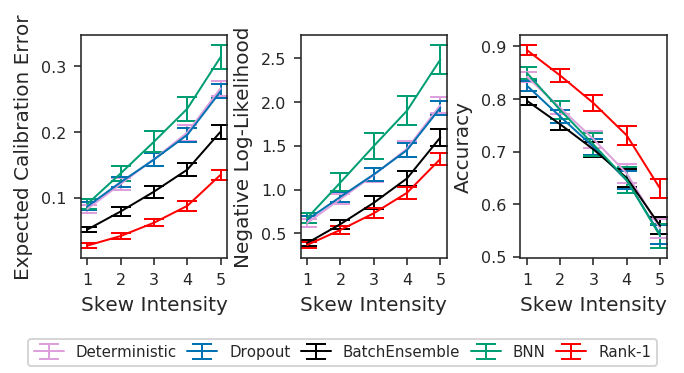}
\includegraphics[width=\columnwidth]{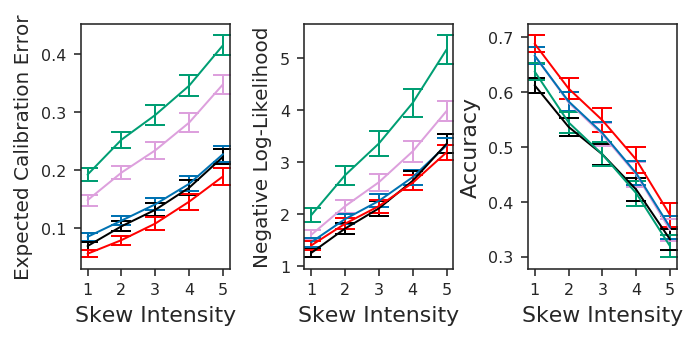}
\vspace{-4ex}
\caption{Out-of-distribution performance using CIFAR-10-C \textbf{(top)} and CIFAR-100-C \textbf{(bottom)} with WRN-28-10. We plot NLL, accuracy, and ECE for varying corruption intensities; each result is the mean performance over 10 runs and 15 corruption types. The error bars represent a fraction 
of the standard deviation across corruption types.
Rank-1 BNNs (\red{red}) perform best across all metrics.
}
\label{fig:cifar100c}
\vspace{-4ex}
\end{figure}

\begin{figure*}[!tb]
\vspace{-1.5ex}
\begin{minipage}{1.0\textwidth}
\centering
\begin{tabular}{clcccccc}
\toprule
\multicolumn{2}{c}{{\small Method}} & 
{\small NLL($\downarrow$)} & 
{\small Accuracy($\uparrow$)} & 
{\small ECE($\downarrow$)} & 
{\small cNLL / cA / cECE}& 
{\small mCE($\downarrow$)}
& {\small \# Parameters} \\
\midrule
\multicolumn{2}{c}{Deterministic} & 0.943 & 76.1 & 0.0392 & 3.20 / 40.5 / 0.105
& 75.34
& 25.6M \\
\multicolumn{2}{c}{BatchEnsemble} & 0.951 & 76.5 & 0.0532 & 3.23 / 41.4 / 0.120
& 74.14
& 25.8M \\
\midrule
\multirow{2}{*}{\textbf{Rank-1 BNN}}
& Gaussian 
& \textbf{0.886} & \textbf{77.3} & \textbf{0.0166} & \textbf{2.95
/ 42.9 / 0.054}
& \textbf{72.12}
& 26.0M \\
& $\text{Cauchy}^{\text{(4 samples)}}$
& 0.897 & 77.2 & 0.0192 & 2.98
/ 42.5 / 0.059
& 72.66
& 26.0M \\
\midrule
Deep Ensembles & ResNet-50 & \textbf{0.877} & \textbf{77.5} & 0.0305  & 2.98 / 42.1  / 0.050  & 73.25 & 146.7M \\
\midrule
MCMC BNN$^1$ & 9 MC samples & 0.888 & 77.1 & - & - & - & 230.4 \\
MCMC BNN$^2$ & 30 MC samples & 0.883 & \textbf{77.5} & - & - & - &  768M \\
\bottomrule
\end{tabular}
\vspace{-1.5ex}
\captionof{table}{
Results for ResNet-50 on ImageNet: negative log-likelihood (lower is better), accuracy (higher is better), and expected calibration error (lower is better). cNLL, cA, and cECE are NLL, accuracy, and ECE averaged over ImageNet-C's corruption types and intensities. mCE is mean corruption error. Results are averaged over 10 seeds, and over 1 weight sample (per mixture component, per seed) for the Gaussian rank-1 BNN, and 4 samples for Cauchy.
We include results for $^1$\citet{zhang2019cyclical} and $^2$\citet{heek2019bayesian}.
Rank-1 BNNs
consistently outperform baselines across all metrics.
}
\label{table:imagenet}
\end{minipage}

\vspace{0.5ex}
\begin{minipage}{1.0\textwidth}
\centering
\begin{tabular}{clccccc}
\toprule
\multicolumn{2}{c}{{\small Method}} & 
{\small NLL($\downarrow$)} & 
{\small Accuracy($\uparrow$)} & 
{\small ECE($\downarrow$)} & 
{\small cNLL / cA / cECE}
& {\small \# Parameters} \\
\midrule
\multicolumn{2}{c}{Deterministic} & 0.159 & 96.0 & 0.023
& 1.05 / 76.1 / 0.153 
& 36.5M \\
\multicolumn{2}{c}{BatchEnsemble} & 0.143 & 96.2 &  0.020
& 1.02 / 77.5 / 0.129
& 36.6M \\
\multicolumn{2}{c}{MC Dropout} & 0.160 & 95.9 & 0.024
& 1.27 / 68.8 / 0.166 
& 36.5M \\
\multicolumn{2}{c}{MFVI BNN} & 0.214 & 94.7 & 0.029
& 1.46 / 71.3 / 0.181
& 73M \\
\midrule
\multirow{2}{*}{\textbf{Rank-1 BNN}} 
& Gaussian 
& 0.128 & 96.3 & \textbf{0.008} & 0.84
/ 76.7 / \textbf{0.080}
& 36.6M \\
& $\text{Cauchy}^{\text{(4 samples)}}$ 
& \textbf{0.120} & \textbf{96.5} & 0.009 & \textbf{0.74
/ 80.5} / 0.090
& 36.6M \\
\midrule
\multirow{2}{*}{Deep Ensembles} & WRN-28-5  & 0.115 & 96.3 & \textbf{0.008}  & 0.84 / 77.2  / 0.089  & 36.68M \\
& WRN-28-10 & \textbf{0.114} & \textbf{96.6}  & 0.010   & 0.81 / 77.9   / 0.087   & 146M \\
\bottomrule
\end{tabular}
\vspace{-1.5ex}
\captionof{table}{
Results for Wide ResNet-28-10 on CIFAR-10, averaged over 10 seeds.
Gaussian rank-1 BNNs with 1 sample
reach top accuracy with BatchEnsemble and otherwise
outperform baselines with comparable parameter count across all metrics.}
\label{table:cifar10}
\end{minipage}

\vspace{0.5ex}
\begin{minipage}{1.0\textwidth}
\centering
\begin{tabular}{clccccc}
\toprule
\multicolumn{2}{c}{{\small Method}} & 
{\small NLL($\downarrow$)} & 
{\small Accuracy($\uparrow$)} & 
{\small ECE($\downarrow$)} & 
{\small cNLL / cA / cECE}
& {\small \# Parameters} \\
\midrule
\multicolumn{2}{c}{Deterministic} & 0.875 & 79.8 & 0.085
& 2.70 / 51.3 / 0.239 & 36.5M \\
\multicolumn{2}{c}{BatchEnsemble} & 0.734 & 81.5 & 0.033
& 2.49 / 54.1 / 0.191 & 36.6M \\
\multicolumn{2}{c}{MC Dropout} & 0.830 & 79.6 & 0.050
& 2.33 / 51.5 / 0.148 & 36.5M \\
\multicolumn{2}{c}{MFVI BNN} & 1.030 &  77.3 & 0.111 & 3.48 / 48.0 / 0.299 & 73M \\
\midrule
\multirow{2}{*}{\textbf{Rank-1 BNN}}
& Gaussian 
& 0.692 & 81.3 &  0.018
 & 2.24 / 53.8 / \textbf{0.117} & 36.6M \\
& $\text{Cauchy}^{\text{(4 samples)}}$ 
& \textbf{0.689} & \textbf{82.4} &  \textbf{0.012}
& \textbf{2.04 / 57.8} / 0.142 & 36.6M \\
\midrule
\multirow{2}{*}{Deep Ensembles} & WRN-28-5 & 0.694  & 81.5  & 0.017 & 2.19 / 53.7  / \textbf{0.111} & 36.68M \\
& WRN-28-10   & \textbf{0.666}  & \textbf{82.7}   & 0.021  & 2.27   / 54.1    / 0.138   & 146M \\
\bottomrule
\end{tabular}
\vspace{-1.5ex}
\captionof{table}{
Results for Wide ResNet-28-10 on CIFAR-100, averaged over 10 seeds.
Gaussian rank-1 BNNs with 1 sample
reach slightly worse accuracy than BatchEnsemble and otherwise
outperform baselines with comparable parameter count.
}
\label{table:cifar100}
\end{minipage}

\vspace{0.5ex}
\begin{minipage}{1.0\textwidth}
\centering
\begin{tabular}{c l c c c | c c c}
\toprule
&
& \multicolumn{3}{c}{\small Validation} & \multicolumn{3}{c}{\small Test} 
\\
\multicolumn{2}{c}{{\small Method}} & 
{\small NLL($\downarrow$)} &
{\small \acrshort*{aucpr}($\uparrow$)} & 
{\small \acrshort*{ece}($\downarrow$)} & 
{\small NLL($\downarrow$)} & 
{\small \acrshort*{aucpr}($\uparrow$)} & 
{\small \acrshort*{ece}($\downarrow$)} 
\\

\midrule
\multicolumn{2}{c}{Deterministic}
& 0.211 
& 0.446 
& 0.0160 
& 0.213 
& 0.390 
& 0.0135 
\\

\multicolumn{2}{c}{BatchEnsemble}
& 0.215 
& 0.447 
& 0.0171 
& 0.215 
& \textbf{0.391} 
& 0.0162 
\\

\multicolumn{2}{c}{Bayesian Embeddings}
& 0.213 
& 0.449 
& 0.0193 
& 0.212 
& \textbf{0.391} 
& 0.0160 
\\

\multicolumn{2}{c}{Fully-Bayesian}
& 0.220 
& 0.424 
& 0.0162 
& 0.221 
& 0.373 
& 0.0161 
\\

\midrule

\multirow{2}{*}{\textbf{Rank-1 BNN}}
& Gaussian
& 0.209 
& \textbf{0.451} 
& 0.0156 
& \textbf{0.209} 
& \textbf{0.391} 
& 0.0132 
\\
& Cauchy
& \textbf{0.207} 
& 0.446 
& \textbf{0.0148} 
& 0.211 
& 0.383 
& \textbf{0.0130} 
\\

\midrule

\multirow{1}{*}{Deep Ensembles}
& Deterministic
& \textbf{0.202} 
& \textbf{0.453} 
& \textbf{0.0132} 
& \textbf{0.206} 
& \textbf{0.396} 
& \textbf{0.0103} 
\\
\bottomrule
\end{tabular}
\vspace{-1.5ex}
\captionof{table}{
Results for RNNs on the MIMIC-III EHR mortality task, averaged over 25 seeds, and over 25 weight samples per seed for all Bayesian models. Rank-1 Bayesian RNNs achieve the best metric performance compared to baselines.
}
\label{table:mimic3}
\vspace{-6ex}
\end{minipage}
\end{figure*}

\vspace{-1ex}
\subsection{CIFAR-100 and CIFAR-100-C}
\Cref{table:cifar100} contains NLL, accuracy, and ECE on both CIFAR-100 and CIFAR-100-C.
Rank-1 BNNs with mixture of Cauchy priors and variational posteriors outperform BatchEnsemble and similarly-sized deep ensembles by a significant margin across all metrics. To the best of our knowledge, this is the first convincing empirical success of Cauchy priors in BNNs, as it significantly improves on predictive performance, robustness, and uncertainty calibration, as observed in \Cref{fig:cifar100c} and
\Cref{sec:cifar100-c}. On the other hand, the Gaussian rank-1 BNNs have a slightly worse accuracy than BatchEnsemble, but outperform all baselines on NLL and ECE while generalizing better on CIFAR-100-C.

This is an exciting result for heavy-tailed priors in Bayesian deep learning. It has long been conjectured that such priors can be more robust to out-of-distribution data while inducing sparsity \citep{louizos2017bayesian} at the expense of accuracy. However, in both experiments summarized in \Cref{table:cifar100} and \Cref{table:cifar10} we can see significant improvements, without a compromise, on modern Wide ResNet architectures.

Rank-1 BNNs also outperform deep ensembles of WRN-28-10 models on uncertainty calibration and robustness while having 4 times fewer parameters. Rank-1 BNNs also significantly close the gap between BatchEnsemble and deep ensembles on in-distribution accuracy. Holding the number of parameters constant, rank-1 BNNs outperform deep ensembles by a significant margin across all metrics.
Conclusions compared to SWAG and Subspace Inference are consistent with CIFAR-10's.

\vspace{-1ex}
\subsection{MIMIC-III Mortality Prediction From EHRs}
Extending beyond image classification tasks, we also show results using rank-1 sequential models.  Following \citet{dusenberry2019analyzing}, we experiment with RNN models for predicting medical outcomes for patients given their de-identified electronic medical records.  More specifically, we replicate their setup for the MIMIC-III \citep{johnson2016} binary mortality task. In our case, we replace the existing variational LSTM \cite{schmidhuber1997} and affine layers with their rank-1 counterparts, and keep the variational embedding vectors.
We use global mixture distributions for the rank-1 layers, and the resulting model is a mixture model with shared stochastic embeddings. 

\Cref{table:mimic3} shows results for NLL, AUC-PR, and ECE on the validation and test sets.
We evaluate on 25 Monte Carlo samples at evaluation time versus 200 samples in the previous work, and report mean results over 25 random seeds.
Our rank-1 Bayesian RNN outperforms all other baselines, including the fully-Bayesian RNN, across all metrics.
These results demonstrate that
our rank-1 BNN methodology can be easily adapted to different types of tasks, different data modalities, and different architectures.

While Gaussian rank-1 RNNs outperform all baselines, the Cauchy variant does not perform as well in terms of AUC-PR, while still improving on NLL and ECE. This result, in addition to that of the ImageNet experiments, indicates the need for further inspection of heavy-tailed distributions in deep or recurrent architectures. In fact, ResNet-50 is a deeper architecture than WRN-28-10, while MIMIC-III RNNs can be unrolled over hundreds of time steps. Given that heavy-tailed posteriors lead to more frequent samples further away from the mode, we hypothesize that instability in the training dynamics is the main reason for underfitting.

\section{Related Work}

\textbf{Hierarchical priors and variational approximations.}
Rank-1 factors can be interpreted as scale factors that are shared across
weight elements.
\Cref{sub:hierarchical} details this and differences from other hierarchical priors
\citep{louizos2017bayesian,ghosh2017model}.
The outer product of rank-1 vectors resembles matrixvariate Gaussians \citep{louizos2016structured}: the major difference is that rank-1 priors are uncertain about the scale factors shared across rows and columns rather than fixing a covariance.
Rank-1 BNNs' variational approximation can be seen as a form of hierarchical variational model \citep{ranganath2016hierarchical} similar to multiplicative normalizing flows, which
posit an auxiliary distribution on the hidden units \citep{louizos2017multiplicative}.
In terms of the specific distribution, instead of normalizing flows we focus on mixtures, a well-known approach for expressive variational inference \citep{jaakkola1998improving,lawrence2001variational}.
Building on these classic works, we examine mixtures in ways that bridge algorithmic differences from deep ensembles and using modern model architectures.

\textbf{Variance reduction techniques for variational BNNs.}
Sampling with rank-1 factors (\Cref{eq:vectorization}) is closely related to Gaussian local reparameterization \citep{kingma2015variational,molchanov2017variational}, where
noise is reparameterized to act on the hidden units to enable weight sampling per-example, providing significant variance reduction over naively sampling a single set of weights and sharing it across the minibatch. Unlike Gaussian local reparameterization,
rank-1 factors are not limited to feedforward layers and location-scale distributions: it is exact for convolutions and recurrence and for arbitrary distributions. This is similar to ``correlated weight noise,'' which \citet{kingma2015variational} also studies and finds performs better than being fully Bayesian. Enabling weight sampling to these settings otherwise necessitates techniques such as Flipout \citep{wen2018flipout}.

\textbf{Parameter-efficient ensembles.}
Monte Carlo Dropout is arguably the most popular efficient ensembling technique, based on Bernoulli noise that deactivates hidden units during training and testing \citep{srivastava2014dropout,gal2015dropout}.
More recently, BatchEnsemble has emerged as an effective technique that is algorithmically similar to deep ensembles, but on rank-1 factors \citep{wen2019batchensemble}. We compare to both MC-dropout and BatchEnsemble as our primary baselines. If a single set of weights is sufficient (as opposed to a distribution for model uncertainty), there are also empirically successful averaging techniques such as Polyak-Ruppert \citep{ruppert1988efficient}, checkpointing, and stochastic weight averaging \citep{izmailov2018averaging}.

\textbf{Scaling up BNNs.}
We are aware of three previous works scaling up BNNs to ImageNet.
Variational Online Gauss Newton reports results on ResNet-18, outperforming a deterministic baseline in terms of NLL but not accuracy, and using 2x the number of neural network weights \citep{osawa2019practical}.
Cyclical SGMCMC~\citep{zhang2019cyclical} and adaptive thermostat MC~\citep{heek2019bayesian} report results on ResNet-50, outperforming a deterministic baseline in terms of NLL and accuracy, using at least 9 samples (i.e., 9x cost).
In our experiments, we use ResNet-50 with comparable parameter count for all methods; we examine not only NLL and accuracy, but also uncertainties via calibration and out-of-distribution evaluation;
and rank-1 BNNs do not apply strategies such as fixed KL scaling or tempering, which complicate the Bayesian interpretation.

Like rank-1 BNNs, \citet{izmailov2019subspace} perform Bayesian inference in a low-dimensional space. Instead of end-to-end training like rank-1 BNNs, it uses two stages where one first performs stochastic weight averaging and then applies PCA to form a projection matrix from the set of weights to, e.g., 5 dimensions, over which one can then perform inference. This projection matrix requires 5x the number of weights.

\section{Discussion}
\label{sec:discussion}
We described rank-1 BNNs, which posit a prior distribution over a rank-1 factor of each weight matrix and are trained with mixture variational distributions.
Rank-1 BNNs are parameter-efficient and scalable as Bayesian inference is done over a much smaller dimensionality.
Across ImageNet, CIFAR-10, CIFAR-100, and MIMIC-III, rank-1 BNNs achieve the best results on predictive and uncertainty metrics across in- and out-of-distribution data.

\section*{Acknowledgements}
We thank Ben Poole, Durk Kingma, Kevin Murphy, Tim Salimans, and Jonas Kemp for their feedback.

\bibliographystyle{icml2020}
\bibliography{refs}

\clearpage
\appendix
\onecolumn

\setcounter{theorem}{0}
\section{Variance Structure of the Rank-1 Perturbations}
\label{appnd:thm}
We hereby study how variance in the score function is captured by the full-rank weight matrix $\mathbf{W}$ parameterization versus the rank-1 $\mathbf{W}_*\circ \mathbf{r}\mathbf{s}^\rT$ parameterization.
We first note that around a local optimum $\mathbf{W}_*$, the score function $\sum_{n=1}^N f(\mathbf{x}_n|\mathbf{W})$ can be approximated using the Hessian $\sum_{n=1}^N \nabla_\mathbf{W}^2 f(\mathbf{x}_n|\mathbf{W})$:
\begin{align*}
\sum_{n=1}^N \lrp{ f(\mathbf{x}_n|\mathbf{W}) - f(\mathbf{x}_n|\mathbf{W}_*) }
\approx \frac{1}{2} \sum_{n=1}^N \sum_{h=1}^H \lrw{\mathbf{W}^{(h)}-\mathbf{W}^{(h)}_*, 
\nabla_{\mathbf{W}^{(h)}}^2 f(\mathbf{x}_n|\mathbf{W}_*) \lrp{\mathbf{W}^{(h)}-\mathbf{W}^{(h)}_*} }_F.
\end{align*}
We can therefore characterize variance around a local optimum via expected fluctuation in the score function, $\sum_{n=1}^N \E{f(\mathbf{x}_n|\mathbf{W}) - f(\mathbf{x}_n|\mathbf{W}_*)}$.
We compare here the effect of the two parameterizations: $\sum_{n=1}^N \Ep{\mathbf{W}}{f(\mathbf{x}_n|\mathbf{W}) - f(\mathbf{x}_n|\mathbf{W}_*)}$ versus $\sum_{n=1}^N \Ep{s}{f(\mathbf{x}_n|\mathbf{W}_*\circ \mathbf{r}\mathbf{s}^\rT) - f(\mathbf{x}_n|\mathbf{W}_*)}$.

In what follows, we take fully connected networks to demonstrate that the rank-1 parameterization can have the same local variance structure as the full-rank parameterization.
We first formulate the fully connected neural network in the following recursive relation.
For fully connected network of width $M$ and depth $H$, the score function $f(\mathbf{x}|\mathbf{W})$ can be recursively defined as:
\begin{align*}
&\mathbf{x}^{(0)} = \mathbf{x}, \\
&\mathbf{x}^{(h)} = \sqrt{\frac{c_\sigma}{M}} \sigma\lrp{\mathbf{W}^{(h)} \mathbf{x}^{(h-1)}}, \quad 1\leq h\leq H \\
&f(\mathbf{x}|\mathbf{W}) = a^\rT \mathbf{x}^{(H)}.
\end{align*}

\begin{theorem}[Formal]
\label{theorem:thm}
For a fully connected network of width $M$ and depth $H$ learned over $N$ data points, let $\mathbf{W}_*$ denote local minimum of $\sum_{n=1}^N f(\mathbf{x}_n|\mathbf{W})$ in the space of weight matrices.
Consider both full-rank perturbation $\lrp{\mathbf{W}-\mathbf{W}_*}$ and rank-$1$ perturbation $\lrp{\mathbf{W}_*\circ \mathbf{r}\mathbf{s}^\rT-\mathbf{W}_*}$.
Assume that the full-rank perturbation has the multiplicative covariance structure that 
\begin{align}
\Ep{\mathbf{W}^{h}}{\lrp{\mathbf{W}^{(h)}-\mathbf{W}^{(h)}_*}_{i,j} \lrp{\mathbf{W}^{(h)}-\mathbf{W}^{(h)}_*}_{k,l}}
= {\mathbf{W}_*}^{(h)}_{i,j} \mathbf{\Sigma}_{j,k} {\mathbf{W}_*}^{(h)}_{k,l},
\label{eq:assumption_noise}
\end{align}
for some symmetric positive semi-definite matrix $\mathbf{\Sigma}$.
Let $\mathbf{s}_*^{(h)}$ denote a column vector of ones.
Then if the rank-$1$ perturbation has covariance $\Ep{\mathbf{s}^{(h)}}{ \lrw{\lrp{\mathbf{s}^{(h)}-\mathbf{s}^{(h)}_*} \lrp{\mathbf{s}^{(h)}-\mathbf{s}^{(h)}_*}}^\rT }
= \mathbf{\Sigma}$, 
\begin{align}
\lefteqn{ \sum_{n=1}^N \sum_{h=1}^H \Ep{\mathbf{W}^{h}}{ \lrw{\mathbf{W}^{(h)}-\mathbf{W}^{(h)}_*, 
\nabla_{\mathbf{W}^{(h)}}^2 f(\mathbf{x}_n|\mathbf{W}_*) \lrp{\mathbf{W}^{(h)}-\mathbf{W}^{(h)}_*} }_F } } \nonumber\\
&= \sum_{n=1}^N \sum_{h=1}^H \Ep{\mathbf{s}^{(h)}}{ \lrw{\lrp{\mathbf{s}^{(h)}-\mathbf{s}^{(h)}_*}, \nabla_{\mathbf{s}^{(h)}}^2 f(\mathbf{x}_n|\mathbf{W}) \lrp{\mathbf{s}^{(h)}-\mathbf{s}^{(h)}_*}} }. 
\end{align}
\end{theorem}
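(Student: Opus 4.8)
The plan is to collapse the claimed identity to a single per‑layer, per‑datapoint Hessian identity and then evaluate both Hessians explicitly with the chain rule. Since the full‑rank and rank‑$1$ perturbations are taken independently across layers with zero mean, the cross‑layer (off‑diagonal) blocks of the Hessian contribute nothing in expectation, and the linear Taylor term vanishes once summed over $n$ because $\mathbf{W}_*$ is a stationary point of $\sum_n f(\mathbf{x}_n\mid\mathbf{W})$; the chain‑rule computation below also shows that $\mathbf{s}^{(h)}=\mathbf{s}^{(h)}_*$ (the all‑ones vector) is a stationary point of the same objective written in the rank‑$1$ parameterization, so the linear term vanishes there too. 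It therefore suffices to prove, for each layer $h$ and each datapoint $n$,
\begin{align*}
&\mathbb{E}_{\mathbf{W}^{(h)}}\big[\langle\, \Delta\mathbf{W}^{(h)},\ \nabla^2_{\mathbf{W}^{(h)}}f(\mathbf{x}_n\mid\mathbf{W}_*)\,\Delta\mathbf{W}^{(h)}\rangle_F\big] \\
&\qquad = \mathbb{E}_{\mathbf{s}^{(h)}}\big[\langle\, \Delta\mathbf{s}^{(h)},\ \nabla^2_{\mathbf{s}^{(h)}}f(\mathbf{x}_n\mid\cdot)\big|_{\mathbf{s}^{(h)}=\mathbf{s}^{(h)}_*}\,\Delta\mathbf{s}^{(h)}\rangle\big],
\end{align*}
where $\Delta\mathbf{W}^{(h)}:=\mathbf{W}^{(h)}-\mathbf{W}^{(h)}_*$ and $\Delta\mathbf{s}^{(h)}:=\mathbf{s}^{(h)}-\mathbf{s}^{(h)}_*$; summing over $n$ and $h$ recovers the theorem.

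The second step factors out the shared ``tail'' of the network. Holding every other layer at its optimum, $f(\mathbf{x}_n\mid\cdot)$ depends on layer $h$ only through the pre‑activation $\mathbf{z}:=\mathbf{W}^{(h)}\mathbf{v}^{(n)}$, where $\mathbf{v}^{(n)}:=\mathbf{x}_n^{(h-1)}$ is fixed and independent of $\mathbf{W}^{(h)}$; hence $f=g_n(\mathbf{z})$ for a scalar map $g_n$ (the remainder of the network from the $h$‑th pre‑activation onward). By \Cref{eq:vectorization}, scaling column $j$ of $\mathbf{W}^{(h)}_*$ by $s^{(h)}_j$ is identical to scaling input coordinate $j$, so in the rank‑$1$ parameterization (with $\mathbf{r}$ held fixed at its all‑ones value) $f$ depends on $\mathbf{s}^{(h)}$ only through $\mathbf{z}=\mathbf{W}^{(h)}_*\big(\mathbf{s}^{(h)}\circ\mathbf{v}^{(n)}\big)$ with the \emph{same} tail $g_n$; that is, $\mathbf{z}$ is linear in $\mathbf{s}^{(h)}$ with coefficient matrix $A^{(n)}:=\mathbf{W}^{(h)}_*\operatorname{diag}(\mathbf{v}^{(n)})$, and at $\mathbf{s}^{(h)}=\mathbf{s}^{(h)}_*$ one recovers the pre‑activation $\mathbf{z}^{(n)}_*:=\mathbf{W}^{(h)}_*\mathbf{v}^{(n)}$ of $\mathbf{W}_*$.

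Because $\mathbf{z}$ is linear in the parameter of interest with constant coefficients, both Hessians are pullbacks of $\nabla^2 g_n$ evaluated at the common point $\mathbf{z}^{(n)}_*$: entrywise, $(\nabla^2_{\mathbf{W}^{(h)}}f)_{(ij),(kl)}=(\nabla^2 g_n)_{ik}\,v^{(n)}_j v^{(n)}_l$ and $(\nabla^2_{\mathbf{s}^{(h)}}f)_{jl}=\sum_{i,k}(\nabla^2 g_n)_{ik}\,A^{(n)}_{ij}A^{(n)}_{kl}$. Substituting the rank‑$1$ covariance $\mathbb{E}[\Delta\mathbf{s}^{(h)}_j\Delta\mathbf{s}^{(h)}_l]=\mathbf{\Sigma}_{jl}$ into the right‑hand quadratic form gives $\sum_{i,j,k,l}(\nabla^2 g_n)_{ik}A^{(n)}_{ij}\mathbf{\Sigma}_{jl}A^{(n)}_{kl}=\operatorname{tr}\!\big(\nabla^2 g_n\,A^{(n)}\mathbf{\Sigma}(A^{(n)})^{\rT}\big)$; substituting the assumed multiplicative covariance $\mathbb{E}[\Delta\mathbf{W}^{(h)}_{ij}\Delta\mathbf{W}^{(h)}_{kl}]=(\mathbf{W}^{(h)}_*)_{ij}\,\mathbf{\Sigma}_{jl}\,(\mathbf{W}^{(h)}_*)_{kl}$ into the left‑hand form and using $A^{(n)}_{ij}=(\mathbf{W}^{(h)}_*)_{ij}v^{(n)}_j$ yields the identical expression. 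This establishes the per‑layer identity, and summation completes the proof.

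The only delicate part is the index bookkeeping in the last step: one must read the multiplicative covariance so that $\mathbf{\Sigma}$ is contracted with the two \emph{input‑side} (column) indices of $\mathbf{W}^{(h)}_*$ — exactly the structure produced by a rank‑$1$ noise $\Delta\mathbf{W}^{(h)}=\mathbf{W}^{(h)}_*\circ\big(\mathbf{1}\,(\Delta\mathbf{s}^{(h)})^{\rT}\big)$ — so that the two quadratic forms match term by term. With this alignment, the chain‑rule Hessian formulas, the cross‑layer cancellation, and the vanishing of the linear Taylor term at a stationary point are all routine once the factorization $f=g_n(\mathbf{z})$ is in place.
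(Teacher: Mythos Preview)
Your proof is correct and follows the same route as the paper: reduce to a per-layer, per-datapoint Hessian quadratic form, use that both parameterizations enter the network linearly through the same pre-activation $\mathbf{z}^{(h)}$, and then match covariances. The paper packages the middle step as an explicit Hessian computation (its Lemma~\ref{lemma:perturbations}), writing out $\nabla^2_{\mathbf{W}^{(h)}}f$ and $\nabla^2_{\mathbf{s}^{(h)}}f$ in terms of $\sigma',\sigma''$ and the downstream weights before comparing the two trace expressions; your abstraction via the tail map $g_n$ and its Hessian $\nabla^2 g_n$ is the same argument stripped of that bookkeeping, landing directly on the common quantity $\operatorname{tr}\!\big(\nabla^2 g_n\,A^{(n)}\mathbf{\Sigma}(A^{(n)})^{\rT}\big)$. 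Your careful reading of the covariance assumption---that $\mathbf{\Sigma}$ must be contracted on the two \emph{column} indices ($\Sigma_{jl}$)---is exactly the reading the paper's own proof uses when it equates $\mathbb{E}[\Delta\mathbf{W}\,\mathbf{x}\mathbf{x}^{\rT}(\Delta\mathbf{W})^{\rT}]$ with $\mathbf{W}_*\operatorname{diag}(\mathbf{x})\,\mathbf{\Sigma}\,\operatorname{diag}(\mathbf{x})\mathbf{W}_*^{\rT}$, even though the displayed assumption in the theorem writes $\Sigma_{j,k}$.
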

Theorem~\ref{theorem:thm} demonstrates a correspondence between the covariance structure in the perturbation of $\mathbf{W}$ and that of $s$.
Since $\mathbf{\Sigma}$ can be any symmetric positive semi-definite matrix, we have demonstrated here that our rank-1 parameterization can efficiently encode a wide range of fluctuations in $\mathbf{W}$.
In particular, it is especially suited for multiplicative noise as advertised.
If the covariance of $\lrp{\mathbf{W}^{(h)}-\mathbf{W}^{(h)}_*}$ is proportional to $\mathbf{W}_* \otimes \mathbf{W}_*^\rT$ itself,
then we can simply take the covariance of $\lrp{s-\mathbf{s}_*}$ to be identity.

We devote the rest of this section to prove Theorem~\ref{theorem:thm}.
\begin{proof}[Proof of Theorem~\ref{theorem:thm}]
We first state the following lemma for the fluctuations of the score function $f$ in $\mathbf{W}$ and $s$ spaces.
\begin{lemma}
\label{lemma:perturbations}
For a fully connected network of width $M$ and depth $H$ learned over $N$ data points, let $\mathbf{W}_*$ denote local minimum of $\sum_{n=1}^N f(\mathbf{x}_n|\mathbf{W})$ in the space of weight matrices.
Then the local fluctuations of the score function in the space of the weight matrix $\mathbf{W}$ is:
\begin{align}
\lefteqn{ \Ep{\mathbf{W}^{(h)}}{ \lrw{\mathbf{W}^{(h)}-\mathbf{W}^{(h)}_*, \nabla_{\mathbf{W}^{(h)}}^2 f(\mathbf{x}_n|\mathbf{W}) \lrp{\mathbf{W}^{(h)}-\mathbf{W}^{(h)}_*}}_F } } \nonumber\\
&= \lrp{\frac{c_\sigma}{M}}^{\frac{H-h+1}{2}} 
\mathrm{trace} \Bigg( \Ep{\mathbf{W}^{(h)}}{ \lrp{\mathbf{W}^{(h)}-\mathbf{W}^{(h)}_*} \mathbf{x}^{(h-1)}_n \lrp{\mathbf{x}^{(h-1)}_n}^\rT \lrp{\mathbf{W}^{(h)}-\mathbf{W}^{(h)}_*}^\rT } \nonumber\\ 
&\cdot \mathrm{diag} \lrp{ \prod_{\mathfrak{h}=h+1}^{H} \mathrm{diag}\lrp{ \sigma'\lrp{\mathbf{W}^{(\mathfrak{h})} \mathbf{x}^{(\mathfrak{h}-1)}} } \mathbf{W}^{(\mathfrak{h})} a} 
\mathrm{diag}\lrp{ \sigma''\lrp{\mathbf{W}^{(h)} \mathbf{x}^{(h-1)}} } \Bigg).
\label{eq:perturbation_W}
\end{align}
and in the space of the low rank representation $s$,
\begin{align}
\lefteqn{ \Ep{\mathbf{s}^{(h)}}{ \lrw{\lrp{\mathbf{s}^{(h)}-\mathbf{s}^{(h)}_*}, \nabla_{\mathbf{s}^{(h)}}^2 f(\mathbf{x}_n|\mathbf{W}) \lrp{\mathbf{s}^{(h)}-\mathbf{s}^{(h)}_*}} } } \nonumber\\
&= \lrp{\frac{c_\sigma}{M}}^{\frac{H-h+1}{2}} \mathrm{trace} \Bigg( 
\mathbf{W}_*^{(h)} \E{ \mathrm{diag}\lrp{\mathbf{s}^{(h)}-\mathbf{s}^{(h)}_*} 
\lrp{\mathbf{x}_n^{(h-1)}} \lrp{\mathbf{x}_n^{(h-1)}}^\rT 
\mathrm{diag}\lrp{\mathbf{s}^{(h)}-\mathbf{s}^{(h)}_*} } \lrp{ \mathbf{W}_*^{(h)} }^\rT \nonumber\\
&\mathrm{diag} \lrp{ \prod_{\mathfrak{h}=h+1}^{H} \mathrm{diag} \lrp{ \sigma'\lrp{\mathbf{W}^{(\mathfrak{h})} \mathbf{x}^{(\mathfrak{h}-1)}} } \cdot \mathbf{W}^{\mathfrak{h}} a } 
\cdot \mathrm{diag}\lrp{ \sigma''\lrp{\mathbf{W}^{(h)} \mathbf{x}^{(h-1)}_n} } 
\Bigg).
\label{eq:perturbation_s}
\end{align}
\end{lemma}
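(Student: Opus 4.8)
Write $\Delta\mathbf{W}^{(h)}:=\mathbf{W}^{(h)}-\mathbf{W}^{(h)}_*$ and $\Delta\mathbf{s}^{(h)}:=\mathbf{s}^{(h)}-\mathbf{s}^{(h)}_*$ (recall $\mathbf{s}^{(h)}_*=\mathbf{1}$). The plan is to reduce both Hessian quadratic forms to a single computation at the layer-$h$ pre-activation $\mathbf{z}^{(h)}_n:=\mathbf{W}^{(h)}\mathbf{x}^{(h-1)}_n$. For fixed $h$, the weight $\mathbf{W}^{(h)}$ enters $f(\mathbf{x}_n\mid\mathbf{W})$ only through $\mathbf{z}^{(h)}_n$, and in the rank-$1$ parameterization the factor $\mathbf{s}^{(h)}$ enters only through $\mathbf{z}^{(h)}_n=\mathbf{W}^{(h)}_*\,\mathrm{diag}(\mathbf{x}^{(h-1)}_n)\,\mathbf{s}^{(h)}$ (cf.\ \Cref{eq:vectorization}); in both cases the dependence of $\mathbf{z}^{(h)}_n$ on the parameter is \emph{linear}. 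Letting $\psi:\mathbb{R}^M\to\mathbb{R}$ denote the map $\mathbf{z}^{(h)}_n\mapsto f(\mathbf{x}_n\mid\mathbf{W})$, the chain rule for a linear inner map gives, for any perturbation $\Delta$, $\lrw{\Delta,\nabla^2 f\,\Delta} = \mathrm{trace}\big(\nabla^2\psi(\mathbf{z}^{(h)}_n)\,(\mathbf{J}\Delta)(\mathbf{J}\Delta)^\rT\big)$, where $\mathbf{J}\Delta = \Delta\mathbf{W}^{(h)}\mathbf{x}^{(h-1)}_n$ when $\Delta=\Delta\mathbf{W}^{(h)}$, and $\mathbf{J}\Delta = \mathbf{W}^{(h)}_*\,\mathrm{diag}(\Delta\mathbf{s}^{(h)})\mathbf{x}^{(h-1)}_n$ when $\Delta=\Delta\mathbf{s}^{(h)}$ (using $\mathrm{diag}(\mathbf{a})\mathbf{b}=\mathrm{diag}(\mathbf{b})\mathbf{a}$). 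Taking the expectation over the perturbation, the remaining factors being evaluated at $\mathbf{W}_*$ to leading order, already produces the outer-product factors $\Ep{\mathbf{W}^{(h)}}{\Delta\mathbf{W}^{(h)}\mathbf{x}^{(h-1)}_n(\mathbf{x}^{(h-1)}_n)^\rT(\Delta\mathbf{W}^{(h)})^\rT}$ and $\mathbf{W}^{(h)}_*\,\Ep{\mathbf{s}^{(h)}}{\mathrm{diag}(\Delta\mathbf{s}^{(h)})\mathbf{x}^{(h-1)}_n(\mathbf{x}^{(h-1)}_n)^\rT\mathrm{diag}(\Delta\mathbf{s}^{(h)})}(\mathbf{W}^{(h)}_*)^\rT$ appearing inside the traces of \Cref{eq:perturbation_W,eq:perturbation_s}.

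Second, I would evaluate $\nabla^2\psi$. Writing $\psi(\mathbf{z})=g^{(h)}\big(\sqrt{c_\sigma/M}\,\sigma(\mathbf{z})\big)$ with $g^{(h)}$ the tail map $\mathbf{x}^{(h)}\mapsto f$, two applications of the chain rule give
\[
\nabla^2\psi(\mathbf{z}) = \sqrt{c_\sigma/M}\;\mathrm{diag}\big(\sigma''(\mathbf{z})\big)\,\mathrm{diag}\big(\nabla g^{(h)}\big) + \frac{c_\sigma}{M}\,\mathrm{diag}\big(\sigma'(\mathbf{z})\big)\,\nabla^2 g^{(h)}\,\mathrm{diag}\big(\sigma'(\mathbf{z})\big).
\]
The first term is the one that appears in the Lemma: it contributes the prefactor $\sqrt{c_\sigma/M}$, the diagonal $\mathrm{diag}\big(\sigma''(\mathbf{W}^{(h)}\mathbf{x}^{(h-1)}_n)\big)$, and the back-propagated gradient $\nabla_{\mathbf{x}^{(h)}_n}f$. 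The second term (the ``Gauss--Newton'' piece) carries the downstream Hessian $\nabla^2 g^{(h)}$, which, when unrolled through the same recursion, is a sum of curvature contributions each carrying a further $\sqrt{c_\sigma/M}$ and hence of strictly smaller order in the width $M$; it is dropped, consistently with the quadratic, block-diagonal truncation of the score function stated just before \Cref{theorem:thm}.

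Third, I would expand $\nabla_{\mathbf{x}^{(h)}_n}f$ via the back-propagation recursion $\nabla_{\mathbf{x}^{(\mathfrak{h}-1)}_n}f = \sqrt{c_\sigma/M}\,\big(\mathbf{W}^{(\mathfrak{h})}\big)^\rT\mathrm{diag}\big(\sigma'(\mathbf{W}^{(\mathfrak{h})}\mathbf{x}^{(\mathfrak{h}-1)}_n)\big)\nabla_{\mathbf{x}^{(\mathfrak{h})}_n}f$, terminated by $\nabla_{\mathbf{x}^{(H)}_n}f=a$, to get $\nabla_{\mathbf{x}^{(h)}_n}f = (c_\sigma/M)^{(H-h)/2}\prod_{\mathfrak{h}=h+1}^{H}\mathrm{diag}\big(\sigma'(\mathbf{W}^{(\mathfrak{h})}\mathbf{x}^{(\mathfrak{h}-1)}_n)\big)\mathbf{W}^{(\mathfrak{h})}\,a$ (up to the ordering/transpose conventions of the statement). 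Multiplying this $(c_\sigma/M)^{(H-h)/2}$ by the $\sqrt{c_\sigma/M}$ from $\nabla^2\psi$ gives the global factor $(c_\sigma/M)^{(H-h+1)/2}$. Substituting $\nabla^2\psi$ into the trace from Step~1, using cyclicity of the trace to move the $\mathrm{diag}$ factors to the right of the outer-product factor, and taking $\Ep{\mathbf{W}^{(h)}}{\cdot}$ (resp.\ $\Ep{\mathbf{s}^{(h)}}{\cdot}$) — which touches only that outer-product factor — yields \Cref{eq:perturbation_W} and \Cref{eq:perturbation_s} as stated.

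I expect the main difficulty to be bookkeeping rather than conceptual: tracking the powers of $\sqrt{c_\sigma/M}$, the transposes, and the left-to-right ordering of the $\mathrm{diag}(\sigma'(\cdot))\mathbf{W}^{(\mathfrak{h})}$ factors along the recursion so that they line up with the statement, and — the single genuine approximation — making precise in what sense the Gauss--Newton/downstream-Hessian term is subleading (a width-scaling estimate, or simply that it falls inside the error already incurred by the quadratic truncation). Everything else is a direct double use of the chain rule together with the identity $\mathbf{v}^\rT\mathbf{M}\mathbf{v}=\mathrm{trace}(\mathbf{M}\mathbf{v}\mathbf{v}^\rT)$.
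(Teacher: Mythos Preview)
Your proposal is correct and follows essentially the same route as the paper: compute the layer-$h$ gradient by back-propagation, differentiate once more to obtain the Hessian, then substitute into the quadratic form and rewrite as a trace via $\mathbf{v}^\rT\mathbf{M}\mathbf{v}=\mathrm{trace}(\mathbf{M}\mathbf{v}\mathbf{v}^\rT)$. Your unification of the $\mathbf{W}^{(h)}$ and $\mathbf{s}^{(h)}$ cases through the common pre-activation $\mathbf{z}^{(h)}_n$ is a clean organizational choice the paper does not make (it treats the two parameterizations in parallel), and you are in fact more explicit than the paper about the dropped downstream-Hessian/Gauss--Newton piece---the paper simply writes the Hessian formulas \eqref{eq:Hessian_W}--\eqref{eq:Hessian_s} as though exact, without discussing that truncation.
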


For perturbations $\lrp{\mathbf{W}-\mathbf{W}_*}$ with a multiplicative structure, we can write that
\begin{align*}
\Ep{\mathbf{W}^{h}}{\lrp{\mathbf{W}^{(h)}-\mathbf{W}^{(h)}_*}_{i,j} \lrp{\mathbf{W}^{(h)}-\mathbf{W}^{(h)}_*}_{k,l}}
= {\mathbf{W}_*}_{i,j} \mathbf{\Sigma}_{j,k} {\mathbf{W}_*}_{k,l},
\end{align*}
for some matrix $\mathbf{\Sigma}$ (in the simplest case where $\mathbf{\Sigma} = \epsilon \cdot \mI$, this corresponds to the covariance of $\lrp{\mathbf{W}-\mathbf{W}_*}$ being a decomposable tensor: $\Ep{\mathbf{W}^{h}}{\lrp{\mathbf{W}^{(h)}-\mathbf{W}^{(h)}_*} \lrp{\mathbf{W}^{(h)}-\mathbf{W}^{(h)}_*}} = \epsilon \cdot \mathbf{W}_* \otimes \mathbf{W}_*^\rT$).
In this multiplicative perturbation case, we can show that if $\Ep{\mathbf{s}^{(h)}}{ \lrp{\mathbf{s}^{(h)}-\mathbf{s}_*} \lrp{\mathbf{s}^{(h)}-\mathbf{s}_*}^\rT } = \mathbf{\Sigma}$, then
\begin{align*}
\lefteqn{ \Ep{\mathbf{W}^{(h)}}{ \lrp{\mathbf{W}^{(h)}-\mathbf{W}^{(h)}_*} \mathbf{x}^{(h-1)}_n \lrp{\mathbf{x}^{(h-1)}_n}^\rT \lrp{\mathbf{W}^{(h)}-\mathbf{W}^{(h)}_*}^\rT } } \\
&= \mathbf{W}_*^{(h)} \mathrm{diag}\lrp{\mathbf{x}_n^{(h-1)}} \mathbf{\Sigma} \ \mathrm{diag}\lrp{\mathbf{x}_n^{(h-1)}} \lrp{ \mathbf{W}_*^{(h)} }^\rT \\
&= \mathbf{W}_*^{(h)} \mathrm{diag}\lrp{\mathbf{x}_n^{(h-1)}}
\Ep{\mathbf{s}^{(h)}}{ \lrp{\mathbf{s}^{(h)}-\mathbf{s}_*} \lrp{\mathbf{s}^{(h)}-\mathbf{s}_*}^\rT }
\mathrm{diag}\lrp{\mathbf{x}_n^{(h-1)}} 
\lrp{ \mathbf{W}_*^{(h)} }^\rT \\
&= \mathbf{W}_*^{(h)} \Ep{\mathbf{s}^{(h)}}{ \mathrm{diag}\lrp{\mathbf{s}^{(h)}-\mathbf{s}_*} 
\lrp{\mathbf{x}_n^{(h-1)}} \lrp{\mathbf{x}_n^{(h-1)}}^\rT 
\mathrm{diag}\lrp{\mathbf{s}^{(h)}-\mathbf{s}_*} } \lrp{ \mathbf{W}_*^{(h)} }^\rT.
\end{align*}
Plugging this result into equations~\ref{eq:perturbation_W} and~\ref{eq:perturbation_s}, we know that for any $n$ and $h$,
\begin{align*}
\Ep{\mathbf{W}^{(h)}}{ \lrw{\mathbf{W}^{(h)}-\mathbf{W}^{(h)}_*, \nabla_{\mathbf{W}^{(h)}}^2 f(\mathbf{x}_n|\mathbf{W}) \lrp{\mathbf{W}^{(h)}-\mathbf{W}^{(h)}_*}}_F } 
= \Ep{\mathbf{s}^{(h)}}{ \lrw{\lrp{\mathbf{s}^{(h)}-\mathbf{s}^{(h)}_*}, \nabla_{\mathbf{s}^{(h)}}^2 f(\mathbf{x}_n|\mathbf{W}) \lrp{\mathbf{s}^{(h)}-\mathbf{s}^{(h)}_*}} }.
\end{align*}
Therefore, 
\begin{align}
\lefteqn{ \sum_{n=1}^N \sum_{h=1}^H \Ep{\mathbf{W}^{h}}{ \lrw{\mathbf{W}^{(h)}-\mathbf{W}^{(h)}_*, 
\nabla_{\mathbf{W}^{(h)}}^2 f(\mathbf{x}_n|\mathbf{W}_*) \lrp{\mathbf{W}^{(h)}-\mathbf{W}^{(h)}_*} }_F } } \nonumber\\
&= \sum_{n=1}^N \sum_{h=1}^H \Ep{\mathbf{s}^{(h)}}{ \lrw{\lrp{\mathbf{s}^{(h)}-\mathbf{s}^{(h)}_*}, \nabla_{\mathbf{s}^{(h)}}^2 f(\mathbf{x}_n|\mathbf{W}) \lrp{\mathbf{s}^{(h)}-\mathbf{s}^{(h)}_*}} }. 
\end{align}
\end{proof}

\begin{proof}[Proof of Lemma~\ref{lemma:perturbations}]
We first analyze the local geometric structures of the score function in the space of the full-rank weight matrix $\mathbf{W}$ and the low rank vector $s$, respectively.
We then leverage this Hessian information to finish our proof.

\paragraph{Local Geometry of the score function $f(\mathbf{x}_n|\mathbf{W}_* \circ \mathbf{r}\mathbf{s}^\rT)$:}
We can first compute the gradient of weight $\mathbf{W}$ at $h$-th layer for the predictive score function $f$ of an $H$ layer fully connected neural network taken at data point $\mathbf{x}_n$: 
\begin{align*}
\lefteqn{ \nabla_{\mathbf{W}^{(h)}} f(\mathbf{x}_n|\mathbf{W}) } \\
&= \frac{\partial \mathbf{x}_n^{(h)}}{\partial \mathbf{W}^{(h)}} \nabla_{\mathbf{x}_n^{(h)}} f(\mathbf{x}|\mathbf{W}) \\
&= \sqrt{\frac{c_\sigma}{M}} \mathrm{diag}\lrp{ \sigma'\lrp{\mathbf{W}^{(h)} \mathbf{x}^{(h-1)}_n} } \cdot \frac{\partial}{\partial \mathbf{x}^{(h)}_n} f(\mathbf{x}_n|\mathbf{W}) \cdot \lrp{\mathbf{x}^{(h-1)}_n}^\rT \\
&= \lrp{\frac{c_\sigma}{M}}^{\frac{H-h+1}{2}} \mathrm{diag}\lrp{ \sigma'\lrp{\mathbf{W}^{(h)} \mathbf{x}^{(h-1)}_n} } \cdot \prod_{\mathfrak{h}=h+1}^{H} \mathrm{diag}\lrp{ \sigma'\lrp{\mathbf{W}^{(\mathfrak{h})} \mathbf{x}^{(\mathfrak{h}-1)}_n} } \cdot \mathbf{W}^{\mathfrak{h}} a \cdot \lrp{\mathbf{x}^{(h-1)}_n}^\rT \\
&= \lrp{\frac{c_\sigma}{M}}^{\frac{H-h+1}{2}} \underbrace{ \sigma'\lrp{\mathbf{W}^{(h)} \mathbf{x}^{(h-1)}_n} 
\prod_{\mathfrak{h}=h+1}^{H} \circ \ \sigma'\lrp{\mathbf{W}^{(\mathfrak{h})} \mathbf{x}^{(\mathfrak{h}-1)}_n} \cdot \mathbf{W}^{\mathfrak{h}} a }_{v_n^{(h)}} \cdot \lrp{\mathbf{x}^{(h-1)}_n}^\rT.
\end{align*}
If we instead take the gradient over the vector $s$, we obtain that
\begin{align*}
\lefteqn{ \nabla_{\mathbf{s}^{(h)}} f(\mathbf{x}_n|\mathbf{W}_* \circ \mathbf{r}\mathbf{s}^\rT) } \\
&= \lrw{ \frac{\partial}{\partial \mathbf{W}^{(h)}} f(\mathbf{x}_n|\mathbf{W}) , \frac{\partial \mathbf{W}^{(h)}}{\partial \mathbf{s}^{(h)}} }_F \\
&= \lrp{ \frac{\partial}{\partial \mathbf{W}^{(h)}} f(\mathbf{x}_n|\mathbf{W}) }^\rT \circ \lrp{\mathbf{W}_*^{(h)}}^\rT \mathbf{r}^{(h)} \\
&= \lrp{\frac{c_\sigma}{M}}^{\frac{H-h+1}{2}} \lrp{\mathbf{W}_*^{(h)}}^\rT \circ \mathbf{x}^{(h-1)}_n \cdot \lrp{v_n^{(h)}}^\rT \mathbf{r}^{(h)} \nonumber\\
&= \lrp{\frac{c_\sigma}{M}}^{\frac{H-h+1}{2}} \lrp{\mathbf{W}_*^{(h)}}^\rT \lrp{ \mathbf{r}^{(h)} \circ v_n^{(h)}} \circ \mathbf{x}^{(h-1)}_n \\
&= \lrp{\frac{c_\sigma}{M}}^{\frac{H-h+1}{2}} 
\mathrm{diag}\lrp{\mathbf{x}^{(h-1)}_n} \lrp{\mathbf{W}_*^{(h)}}^\rT \mathrm{diag}\lrp{\mathbf{r}^{(h)}} v_n^{(h)}.
\end{align*}

We can further analyze the Hessian of $f$:
\begin{align}
\lefteqn{ \nabla_{\mathbf{W}^{(h)}}^2 f(\mathbf{x}_n|\mathbf{W}) } \nonumber\\
&= \lrp{\frac{c_\sigma}{M}}^{\frac{H-h+1}{2}} 
\mathrm{diag}\lrp{ \prod_{\mathfrak{h}=h+1}^{H} \mathrm{diag}\lrp{ \sigma'\lrp{\mathbf{W}^{(\mathfrak{h})} \mathbf{x}^{(\mathfrak{h}-1)}_n} } \mathbf{W}^{\mathfrak{h}} a } \mathrm{diag}\lrp{ \sigma''\lrp{\mathbf{W}^{(h)} \mathbf{x}^{(h-1)}_n} } 
\otimes \mathbf{x}^{(h-1)}_n \lrp{\mathbf{x}^{(h-1)}_n}^\rT.
\label{eq:Hessian_W}
\end{align}
Whereas for $s$,
\begin{align}
\lefteqn{ \nabla_{\mathbf{s}^{(h)}}^2 f(\mathbf{x}_n|\mathbf{W}_* \circ \mathbf{r}\mathbf{s}^\rT) } \nonumber\\
&= \lrp{\frac{c_\sigma}{M}}^{\frac{H-h+1}{2}} 
\mathrm{diag}\lrp{\mathbf{x}_n^{(h-1)}} \lrp{ \mathbf{W}_*^{(h)} }^\rT \mathrm{diag}\lrp{\mathbf{r}^{(h)}} 
\mathrm{diag} \lrp{ \prod_{\mathfrak{h}=h+1}^{H} \mathrm{diag} \lrp{ \sigma'\lrp{\mathbf{W}^{(\mathfrak{h})} \mathbf{x}^{(\mathfrak{h}-1)}} } \cdot \mathbf{W}^{(\mathfrak{h})} a } \nonumber\\
&\cdot \mathrm{diag}\lrp{ \sigma''\lrp{\mathbf{W}^{(h)} \mathbf{x}^{(h-1)}_n} } 
\mathrm{diag}\lrp{\mathbf{r}^{(h)}}  \mathbf{W}_*^{(h)} \mathrm{diag}\lrp{\mathbf{x}_n^{(h-1)}}.
\label{eq:Hessian_s}
\end{align}

\paragraph{Variance Structures in the Score Function:}
Applying the results in equations~\ref{eq:Hessian_W} and~\ref{eq:Hessian_s}, we obtain that
\begin{align*}
\lefteqn{ \Ep{\mathbf{W}^{(h)}}{ \lrw{\mathbf{W}^{(h)}-\mathbf{W}^{(h)}_*, \nabla_{\mathbf{W}^{(h)}}^2 f(\mathbf{x}_n|\mathbf{W}) \lrp{\mathbf{W}^{(h)}-\mathbf{W}^{(h)}_*}}_F } } \\
&= \lrp{\frac{c_\sigma}{M}}^{\frac{H-h+1}{2}}
\mathbb{E}_{\mathbf{W}^{(h)}} \Bigg[ \lrp{\mathbf{x}^{(h-1)}_n}^\rT \lrp{\mathbf{W}^{(h)}-\mathbf{W}^{(h)}_*}^\rT \\ 
&\mathrm{diag}\lrp{ \prod_{\mathfrak{h}=h+1}^{H} \mathrm{diag}\lrp{ \sigma'\lrp{\mathbf{W}^{(\mathfrak{h})} \mathbf{x}^{(\mathfrak{h}-1)}} } \mathbf{W}^{\mathfrak{h}} a } 
\mathrm{diag}\lrp{ \sigma''\lrp{\mathbf{W}^{(h)} \mathbf{x}^{(h-1)}} } 
\lrp{\mathbf{W}^{(h)}-\mathbf{W}^{(h)}_*} \mathbf{x}^{(h-1)}_n \Bigg]\\
&= \lrp{\frac{c_\sigma}{M}}^{\frac{H-h+1}{2}} 
\mathrm{trace} \Bigg( \Ep{\mathbf{W}^{(h)}}{ \lrp{\mathbf{W}^{(h)}-\mathbf{W}^{(h)}_*} \mathbf{x}^{(h-1)}_n \lrp{\mathbf{x}^{(h-1)}_n}^\rT \lrp{\mathbf{W}^{(h)}-\mathbf{W}^{(h)}_*}^\rT } \\ 
&\cdot \mathrm{diag} \lrp{ \prod_{\mathfrak{h}=h+1}^{H} \mathrm{diag}\lrp{ \sigma'\lrp{\mathbf{W}^{(\mathfrak{h})} \mathbf{x}^{(\mathfrak{h}-1)}} } \mathbf{W}^{\mathfrak{h}} a} 
\mathrm{diag}\lrp{ \sigma''\lrp{\mathbf{W}^{(h)} \mathbf{x}^{(h-1)}} } \Bigg).
\end{align*}
and that
\begin{align*}
\lefteqn{ \Ep{\mathbf{s}^{(h)}}{ \lrw{\lrp{\mathbf{s}^{(h)}-\mathbf{s}^{(h)}_*}, \nabla_{\mathbf{s}^{(h)}}^2 f(\mathbf{x}_n|\mathbf{W}) \lrp{\mathbf{s}^{(h)}-\mathbf{s}^{(h)}_*}} } } \\
&= \lrp{\frac{c_\sigma}{M}}^{\frac{H-h+1}{2}} 
\mathbb{E} \lrp{\mathbf{W}_*^{(h)} \lrp{\mathbf{x}_n^{(h-1)} \circ \lrp{\mathbf{s}^{(h)}-\mathbf{s}^{(h)}_*}} \circ \mathbf{r}^{(h)}_*}^\rT
\mathrm{diag} \lrp{ \prod_{\mathfrak{h}=h+1}^{H} \mathrm{diag} \lrp{ \sigma'\lrp{\mathbf{W}^{(\mathfrak{h})} \mathbf{x}^{(\mathfrak{h}-1)}} } \cdot \mathbf{W}^{\mathfrak{h}} a } \\
&\cdot \mathrm{diag}\lrp{ \sigma''\lrp{\mathbf{W}^{(h)} \mathbf{x}^{(h-1)}_n} } 
\cdot \mathbf{W}_*^{(h)} \lrp{\mathbf{x}_n^{(h-1)} \circ \lrp{\mathbf{s}^{(h)}-\mathbf{s}^{(h)}_*}} \circ \mathbf{r}^{(h)}_* \\
&= \lrp{\frac{c_\sigma}{M}}^{\frac{H-h+1}{2}} \mathrm{trace} \Bigg( 
\mathbf{W}_*^{(h)} \E{ \mathrm{diag}\lrp{\mathbf{s}^{(h)}-\mathbf{s}^{(h)}_*} 
\lrp{\mathbf{x}_n^{(h-1)}} \lrp{\mathbf{x}_n^{(h-1)}}^\rT 
\mathrm{diag}\lrp{\mathbf{s}^{(h)}-\mathbf{s}^{(h)}_*} } \lrp{ \mathbf{W}_*^{(h)} }^\rT \\
&\mathrm{diag} \lrp{ \prod_{\mathfrak{h}=h+1}^{H} \mathrm{diag} \lrp{ \sigma'\lrp{\mathbf{W}^{(\mathfrak{h})} \mathbf{x}^{(\mathfrak{h}-1)}} } \cdot \mathbf{W}^{\mathfrak{h}} a } 
\cdot \mathrm{diag}\lrp{ \sigma''\lrp{\mathbf{W}^{(h)} \mathbf{x}^{(h-1)}_n} } 
\Bigg).
\end{align*}

\end{proof}

\clearpage

\section{Additional Experimental Details and Hyperparameters}
\label{appnd:hparams}

We experiment with both mixture of Gaussian and mixture of Cauchy priors (and variational posteriors) for the rank-1 factors. All reported results are averages over 10 runs for the image classification tasks and 25 runs for the EHR task. For Gaussian distributions in the image tasks, we achieve superior metric performance using only 1 Monte Carlo sample for each of 4 components to estimate the integral in \Cref{eq:loss} for both training and evaluation, unlike much of the BNN literature, and we show further gains from using larger numbers of samples (4 and 25; see \cref{sec:num-weight-samples}). For Cauchy distributions on those image tasks, we use 1 Monte Carlo sample for each of 4 components for training, and use 4 samples per component during evaluation. 
For the EHR task, we also use only 1 sample during training, but use 25 samples during evaluation (down from 200 samples for the Bayesian models in \citet{dusenberry2019analyzing}). See \Cref{appnd:hparams} for details on hyperparameters. Our code uses TensorFlow and Edward2's Bayesian Layers \citep{tran2018bayesian}; all experiments are available at
\url{https://github.com/google/edward2}.

For rank-1 BNNs, there are three hyperparameters in addition to the deterministic baseline's: the number of mixture components (we fix it at 4); prior standard deviation (we vary among 0.05, 0.1, and 1); and the mean initialization for variational posteriors (either random sign flips with probability random\_sign\_init or a random normal with mean 1 and standard deviation random\_sign\_init).
All hyperparameters for our rank-1 BNNs can be found in Tables \ref{table:cifar10-hparams}, \ref{table:imagenet-hparams}, and \ref{table:mimic-hparams}.

Following \Cref{sec:rank-1}'s ablations, we always (with one exception) use a prior with mean at 1, the average per-component log-likelihood, and initialize variational posterior standard deviations under the dropout parameterization as $10^{-3}$ for Gaussian priors and $10^{}$. The one exception is the Cauchy rank-1 Bayesian RNN on MIMIC-III, where we use a prior with mean 0.5.

Rank-1 BNNs apply rank-1 factors to all layers in the network except for normalization layers and the embedding layers in the MIMIC-III models.
We are not Bayesian about the biases, but we do not find it made a difference.

We use a linear KL annealing schedule for 2/3 of the total number of training epochs (we also tried 1/3 and 1/4 and did not find the setting sensitive). Rank-1 BNNs use 250 training epochs for CIFAR-10/100 (deterministic uses 200); 135 epochs for ImageNet (deterministic uses 90); and 12000 to 25000 steps for MIMIC-III.

All methods use the largest batch size before we see a generalization gap in any method. For ImageNet, this is 32 TPUv2 cores with a per-core batch size of 128; for CIFAR-10/100, this is 8 TPUv2 cores with a per-core batch size of 64; for MIMIC-III this differs depending on the architecture. All CIFAR-10/100 and ImageNet methods use SGD with momentum with the same step-wise learning rate decay schedule, built on the deterministic baseline. For MIMIC-III, we use Adam \cite{kingma2014adam} with no decay schedule.

For MIMIC-III, all hyperparameters for the baselines match those of \citet{dusenberry2019analyzing}, except we used a batch size of 128 for the deterministic and Bayesian Embeddings models. Since \citet{dusenberry2019analyzing} tuned each model separately, including the architecture sizes, we also tuned our rank-1 Bayesian RNN architecture sizes (for performance and memory constraints). Of note, the Gaussian rank-1 RNN has a slightly smaller architecture (rnn\_dim=512 vs. 1024).

\begin{minipage}{1.0\textwidth}
\centering
\begin{tabular}{|c *{4}{|c|}}
\hline
\textbf{Dataset} & \multicolumn{2}{c|}{CIFAR-10} &  \multicolumn{2}{c|}{CIFAR-100} \\
\midrule
ensemble\_size & \multicolumn{4}{c|}{$4$} \\
base\_learning\_rate & \multicolumn{4}{c|}{$0.1$} \\
prior\_mean & \multicolumn{4}{c|}{$1.0$} \\
per\_core\_batch\_size & \multicolumn{4}{c|}{$64$} \\ 
num\_cores & \multicolumn{4}{c|}{$8$} \\
lr\_decay\_ratio & \multicolumn{4}{c|}{$0.2$} \\
train\_epochs & \multicolumn{4}{c|}{$250$} \\
lr\_decay\_epochs & \multicolumn{4}{c|}{[80, 160, 180]} \\
kl\_annealing\_epochs & \multicolumn{4}{c|}{$200$} \\
\midrule
l2 & \multicolumn{2}{c|}{$0.0001$} & \multicolumn{2}{c|}{$0.0003$} \\
\midrule
\textbf{Method} & Normal & Cauchy & Normal & Cauchy \\
\midrule
alpha\_initializer & trainable\_normal & trainable\_cauchy & trainable\_normal & trainable\_cauchy\\ 
alpha\_regularizer & normal\_kl\_divergence & cauchy\_kl\_divergence & normal\_kl\_divergence & cauchy\_kl\_divergence \\ 
gamma\_initializer & trainable\_normal & trainable\_cauchy & trainable\_normal & trainable\_cauchy \\ 
gamma\_regularizer & normal\_kl\_divergence & cauchy\_kl\_divergence & normal\_kl\_divergence & cauchy\_kl\_divergence \\ 
prior\_stddev & $0.1$ & $0.1$ & $0.1$ & $0.01$ \\
dropout\_rate (init) & $0.001$ & $10^{-6}$ & $0.001$ & $10^{-6}$ \\ 
random\_sign\_init & $-0.5$  & $-0.5$ & $-1.0$ & $-1.0$  \\ 
\bottomrule
\end{tabular}
\vspace{-1ex}
\captionof{table}{
Hyperparameter values for Rank-1 BNNs with Wide ResNet-28-10  on CIFAR-10 and CIFAR-100. Alpha and Gamma refer to the $r$ and $s$ vectors in the main text. The initializer determines the form of the variational posterior whereas the regularizer dictates the choice of priors. Note that all priors and approximate posteriors are mixtures.}
\label{table:cifar10-hparams}
\end{minipage}

\begin{minipage}{1.0\textwidth}
\centering
\begin{tabular}{|c *{2}{|c|}}
\hline
\textbf{Dataset} & \multicolumn{2}{c|}{ImageNet} \\
\midrule
ensemble\_size & \multicolumn{2}{c|}{$4$} \\
base\_learning\_rate & \multicolumn{2}{c|}{$0.1$} \\
prior\_mean & \multicolumn{2}{c|}{$1.0$} \\
per\_core\_batch\_size  & \multicolumn{2}{c|}{$128$} \\ 
num\_cores  & \multicolumn{2}{c|}{$32$} \\
lr\_decay\_ratio & \multicolumn{2}{c|}{$0.1$} \\
train\_epochs  & \multicolumn{2}{c|}{$135$} \\
lr\_decay\_epochs & \multicolumn{2}{c|}{[45, 90, 120]} \\
kl\_annealing\_epochs & \multicolumn{2}{c|}{$90$} \\
l2 & \multicolumn{2}{c|}{$0.0001$} \\
\midrule
\textbf{Method} & Normal & Cauchy  \\
\midrule
alpha\_initializer & trainable\_normal & trainable\_cauchy \\ 
alpha\_regularizer & normal\_kl\_divergence & cauchy\_kl\_divergence \\ 
gamma\_initializer & trainable\_normal & trainable\_cauchy \\ 
gamma\_regularizer & normal\_kl\_divergence & cauchy\_kl\_divergence \\ 
prior\_stddev & $0.05$ & $0.005$ \\
dropout\_rate (init) & $0.001$ & $10^{-6}$\\ 
random\_sign\_init & $-0.75$ & $-0.5$  \\ 
\bottomrule
\end{tabular}
\vspace{-1ex}
\captionof{table}{
Hyperparameter values for Rank-1 BNNs with ResNet-50 on ImageNet. }
\label{table:imagenet-hparams}
\end{minipage}

\begin{minipage}{1.0\textwidth}
\centering
\begin{tabular}{|c *{2}{|c|}}
\hline
\textbf{Dataset} & \multicolumn{2}{c|}{MIMIC-III} \\
\midrule
ensemble\_size & \multicolumn{2}{c|}{4} \\
embeddings\_initializer & \multicolumn{2}{c|}{trainable\_normal} \\
embeddings\_regularizer & \multicolumn{2}{c|}{normal\_kl\_divergence} \\
random\_sign\_init & \multicolumn{2}{c|}{$0.5$}\\ 
rnn\_dim & \multicolumn{2}{c|}{$512$} \\
hidden\_layer\_dim & \multicolumn{2}{c|}{$512$} \\
l2 & \multicolumn{2}{c|}{$1\mathrm{e}{-4}$} \\
bagging\_time\_precision & \multicolumn{2}{c|}{$86400$} \\
num\_ece\_bins & \multicolumn{2}{c|}{$15$} \\
\midrule
\textbf{Method} & Normal & Cauchy  \\
\midrule
alpha\_initializer & trainable\_normal & trainable\_cauchy \\ 
alpha\_regularizer & normal\_kl\_divergence & cauchy\_kl\_divergence \\ 
gamma\_initializer & trainable\_normal & trainable\_cauchy \\ 
gamma\_regularizer & normal\_kl\_divergence & cauchy\_kl\_divergence \\ 
prior\_mean & $1.$ & $0.5$ \\
prior\_stddev & $0.1$ & $0.0001$ \\
dropout\_rate (init) & $0.001$ & $5\mathrm{e}{-7}$\\ 
dense\_embedding\_dimension & $32$ & $16$ \\
embedding\_dimension\_multiplier & $0.85827$ & $0.984215$ \\
batch\_size & $128$ & $32$ \\
learning\_rate & $0.00030352$ & $0.001$ \\
fast\_weight\_lr\_multiplier & $1.$ & $0.575$ \\
kl\_annealing\_steps & $20000$ & $694216$ \\
max\_steps & $25000$ & $12000$ \\
bagging\_aggregate\_older\_than & $-1$ & $60*60*24*90$ \\
clip\_norm & $7.29199$ & $1.83987$ \\
\bottomrule
\end{tabular}
\vspace{-1ex}
\captionof{table}{
Hyperparameter values for Rank-1 Bayesian RNNs on MIMIC-III.}
\label{table:mimic-hparams}
\end{minipage}

\clearpage
\section{Further Ablation Studies}

\subsection{Initialization}
\label{subsec:init}

\begin{figure}[!t]
\centering
\begin{subfigure}[t]{0.32\columnwidth}
\centering
\includegraphics[width=0.925\columnwidth]{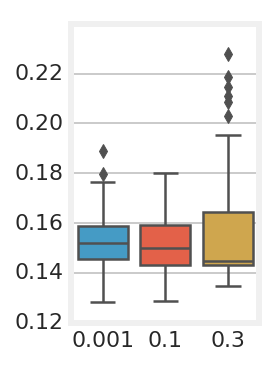}
\caption{Test NLL}
\end{subfigure}\hfill
\begin{subfigure}[t]{0.29\columnwidth}
\centering
\includegraphics[width=\columnwidth]{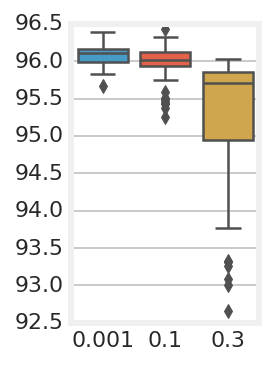}
\caption{Test Accuracy}
\end{subfigure}\hfill
\begin{subfigure}[t]{0.32\columnwidth}
\centering
\includegraphics[width=\columnwidth]{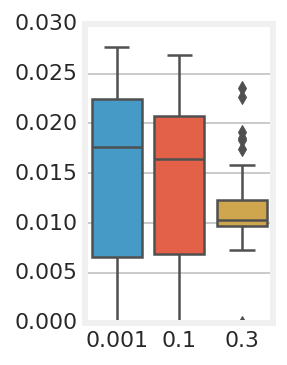}
\caption{Test ECE}
\end{subfigure}
\vspace{-1ex}
\caption{Dropout-parameterized initialization for the variational distribution's standard deviations. Each boxplot is over 96 runs from a hyperparameter sweep. Using a dropout rate (and therefore standard deviation) close to zero gets much better accuracy at a slight cost of calibration error.
}
\label{fig:initialization}
\end{figure}

There are two sets of parameters to initialize: the set of weights $\mathbf{W}$ and the variational parameters of the rank-1 distributions $q(\mathbf{r})$ and $q(\mathbf{s})$. The weights are initialized just as in deterministic networks.
For the variational posterior distributions, we initialize the mean following BatchEnsemble: random sign flips of $\pm1$ or a draw from a normal centered at 1. This encourages each sampled vector to be roughly orthogonal from one another (thus inducing different directions for diverse solutions as one takes gradient steps); unit mean encourages the identity.

For the variational standard deviation parameters $\sigma$, we explore two approaches (\Cref{fig:initialization}).
The first is a ``deterministic initialization,'' where $\sigma$ is set close to zero such that---when combined with KL annealing---the initial optimization trajectory resembles a deterministic network's.
This is commonly used for variational inference (e.g., \citet{kucukelbir2017automatic}). Though this aids optimization and aims to prevent underfitting, one potential reason for why BNNs still underperform is that a deterministic initialization encourages poorly estimated uncertainties: the distribution of weights may be less prone to expand as the
annealed KL
penalizes deviations from the prior (the cost tradeoff under the likelihood may be too high).
Alternatively, we also try a ``dropout initialization'', where standard deviations are reparameterized with a dropout rate: $\sigma=\sqrt{p / (1-p)}$ where $p$ is the binary dropout probability.\footnote{
To derive this, observe that dropout's Bernoulli noise, which takes the value $0$ with probability $p$ and $1/(1-p)$ otherwise, has mean $1$ and variance $p/(1-p)$ \citep{srivastava2014dropout}.
}
Dropout rates between 0.1 and 0.3 (common in modern architectures) imply a standard deviation of 0.3-0.65.
\Cref{fig:initialization} shows accuracy and calibration both decrease as a function of initialized dropout rate;
NLL stays roughly the same. We recommend deterministic initialization as the accuracy gains justify the minor cost in calibration.

\subsection{Real-valued Scale Parameterization}
\label{sub:real}

As shown in \Cref{eq:rank1-weight-prior}, the hierarchical prior over $\mathbf{r}$ and $\mathbf{s}$ induces a prior over the scale parameters of the layer's weights. A natural question that arises is: should the $\mathbf{r}$ and $\mathbf{s}$ priors be constrained to be positive-valued, or left unconstrained as real-valued priors? Intuitively, real-valued priors are preferable because they can modulate the sign of the layer's inputs and outputs. To determine whether this is beneficial and necessary, we perform an ablation under our CIFAR-10 setup (\Cref{sec:experiments}). In this experiment, we compare a global mixture of Gaussians for the real-valued prior, and a global mixture of log-Gaussian distributions for the positive-valued prior. For each, we tune over the initialization of the prior's standard deviation, and the L2 regularization for the point-wise estimated $\mathbf{W}$. For the Gaussians, we also tune over the initialization of the prior's mean.

\Cref{fig:real_vs_pos} displays our findings. Similar to study of priors over $\mathbf{r}$, $\mathbf{s}$, or both, we compare results across NLL, accuracy, and ECE on the test set and CIFAR-10-C corruptions dataset. We find that both setups are comparable on test accuracy, and that the real-valued setup outperforms the other on test NLL and ECE. For the corruptions task, the two setups compare equally on NLL, and differ on accuracy and ECE.

\begin{figure}[!ht]
\centering
\includegraphics[width=0.8\columnwidth]{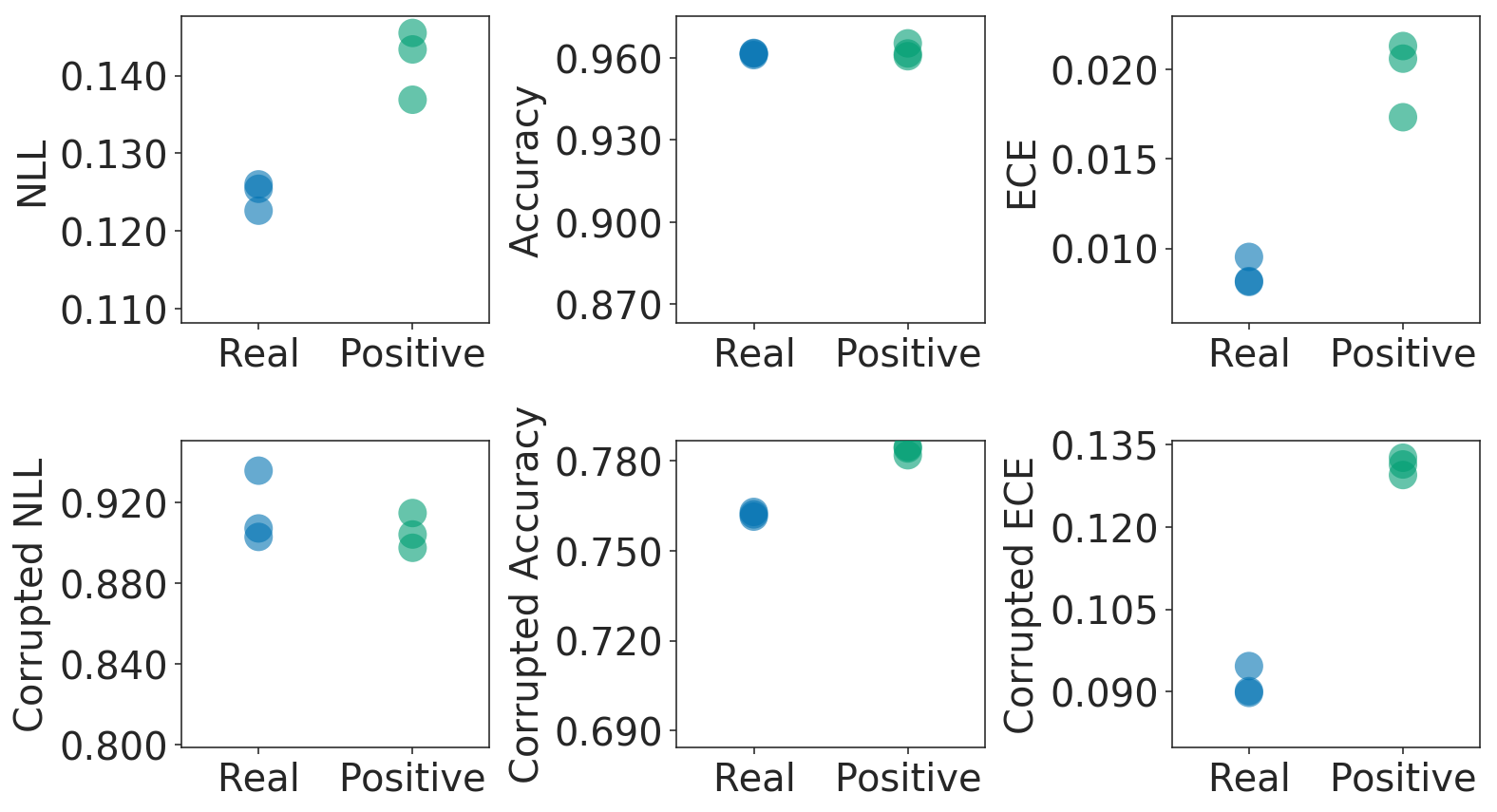}
\caption{Real-valued vs positive-valued priors over $\mathbf{s}$ and $\mathbf{r}$, each evaluated over three runs on the CIFAR-10 test set and CIFAR-10-C corrupted dataset.}
\label{fig:real_vs_pos}
\end{figure}

\subsection{Number of Evaluation Samples}
\label{sec:num-weight-samples}

In \Cref{table:cifar10-num-samples}, we experiment with using multiple weight samples, per mixture component, per example, at evaluation time for our Wide ResNet-28-10 model trained on CIFAR-10. In all cases, we use the same model that was trained using only a single weight sample (per mixture component, per example). As expected, an increased number of samples improves metric performance, with a significant improvement across all corrupted metrics. This demonstrates one of the benefits to incorporating local distributions over each mixture component, namely that given an increased computational budget, one can improve upon the metric performance at prediction time.

\begin{figure*}[!tb]
\begin{minipage}{1.0\textwidth}
\centering
\begin{tabular}{cccccc}
\\
\toprule
\multicolumn{2}{c}{Method} & NLL($\downarrow$) & Accuracy($\uparrow$) & ECE($\downarrow$) & cNLL / cA / cECE
\\
\midrule
\multirow{3}{*}{Rank-1 BNN - Gaussian}
& 1 sample & 0.128 & \textbf{96.3} & 0.008 & 0.84 / 76.7 / 0.080 \\
& 4 samples & 0.126 & \textbf{96.3} & 0.008 & 0.80 / 77.3 / 0.074 \\
& 25 samples & \textbf{0.125} & \textbf{96.3} & \textbf{0.007} & \textbf{0.77} / \textbf{77.8} / \textbf{0.070} \\

\midrule
\multirow{1}{*}{Rank-1 BNN - Cauchy}
& 4 samples & \textbf{0.120} & \textbf{96.5} & 0.009 & \textbf{0.74 / 80.5} / 0.090 \\

\midrule
\multirow{2}{*}{Deep Ensembles} & WRN-28-5  & 0.115 & 96.3 & 0.008  & 0.84 / 77.2  / 0.089 \\
& WRN-28-10 & \textbf{0.114} & \textbf{96.6}  & 0.010   & 0.81 / 77.9 / 0.087 \\
\bottomrule
\end{tabular}
\vspace{-1ex}
\captionof{table}{
Results across multiple weight samples (per mixture component, per example) at evaluation time for Wide ResNet-28-10 on CIFAR-10. Greater than 1 sample with Gaussian distributions yields a marginal improvement on in-distribution NLL and ECE, while yielding a significant improvement on all corrupted metrics. Cauchy rank-1 BNNs with 4 weight samples outperform Gaussians on all metrics except ECE. Note that training still uses a single weight sample (per mixture component, per example) for both Gaussian and Cauchy rank-1 BNNs. We include the deep ensembles results again to show that with an increased number of samples, a rank-1 WRN-28-10 can exceed an ensemble of WRN-28-5 models, which collectively have a comparable parameter count.
}
\label{table:cifar10-num-samples}
\vspace{-3ex}
\end{minipage}
\end{figure*}

\section{Additional Discussion and Future Directions}
For future work, we'd like to push further on our results by scaling to larger ImageNet models to achieve state-of-the-art in test accuracy
alongside other metrics.
Although we focus on variational inference in this paper, applying this parameterization in MCMC is a promising parameter-efficient strategy for scalable BNNs.
As an alternative to using mixtures trained with the average per-component log-likelihood, one can use multiple independent chains over the rank-1 factors. Another direction for future work is the straightforward extension to higher rank factors. However, prior work \citep{swiatkowski2019ktied, izmailov2019subspace} has demonstrated diminishing returns that practically stop at ranks 3 or 5.

One surprising finding in our experimental results is that heavy-tailed priors, on a low-dimensional subspace, can significantly improve robustness and uncertainty calibration while maintaining or improving accuracy. This is likely due to the heavier tails allowing for more points in loss landscape valleys to be covered, whereas a mixture of lighter tails could place multiple modes that are nearly identical. 
However, with deeper or recurrent architectures, samples from the heavy-tailed posteriors seem to affect the stability of the training dynamics, leading to slightly worse predictive performance. One additional direction for future work is to explore ways to stabilize automatic differentiation through such approximate posteriors or to pair heavy-tailed priors with sub-Gaussian posteriors.

\section{Choices of Loss Functions}
\label{sec:nll}
\subsection{Definitions}
\begin{align*}
\begin{split}
\mathbf{x} &\in \mathbb{R}^d, \quad \mathbf{y}_c \in \{0, 1\}, \, \sum_{c=1}^C \mathbf{y}_c = 1 \\
\textbf{logits} &= f(\mathbf{x}, \boldsymbol{\theta}) \\
\textbf{probs} &= \operatorname{softmax}(\textbf{logits}) \\
\operatorname{softmax}(\boldsymbol{\lambda}) &= \frac{e^{\boldsymbol{\lambda}} }{\sum_{i=1}^{\|\boldsymbol{\lambda}\|} e^{\boldsymbol{\lambda}_i}} \\
p(\mathbf{y} | \mathbf{x}, \boldsymbol{\theta}) &= \operatorname{Categorical}(\mathbf{y}; \textbf{probs}) \\
&= \prod_{c=1}^C (\operatorname{softmax}(f(\mathbf{x}, \boldsymbol{\theta}))_c)^{\mathbf{y}_c} \\
- \log p(\mathbf{y} | \mathbf{x}, \boldsymbol{\theta}) &= -\sum_{c=1}^C \mathbf{y}_c \log \operatorname{softmax}(f(\mathbf{x}, \boldsymbol{\theta}))_c \\
&= - \mathbf{y}^{\top} \log \operatorname{softmax}(f(\mathbf{x}, \boldsymbol{\theta})) \\
M &= \textrm{num\_weight\_samples} \\
C &= \textrm{num\_classes} \\
\end{split}
\end{align*}

\subsection{Negative log-likelihood of marginalized \textit{logits}}
\begin{equation}
\begin{split}
&= - \mathbf{y}^{\top} \log \operatorname{softmax}\left(\int f(\mathbf{x}, \boldsymbol{\theta}) p(\boldsymbol{\theta}) d \boldsymbol{\theta}\right) \\
&\approx - \mathbf{y}^{\top} \log \operatorname{softmax}\left(\frac{1}{M} \sum_{m=1}^M f(\mathbf{x}, \boldsymbol{\theta}^{(m)})\right) \\
\end{split}
\label{eqn:marginal_logit_nll} 
\end{equation}

\subsection{Negative log-likelihood of marginalized \textit{probs}}
\begin{equation}
\begin{split}
&= - \mathbf{y}^{\top} \log \left\{ \int \operatorname{softmax}(f(\mathbf{x}, \boldsymbol{\theta})) p(\boldsymbol{\theta}) d \boldsymbol{\theta} \right\} \\
&\approx - \mathbf{y}^{\top} \log \left\{ \left( \frac{1}{M} \sum_{m=1}^M \operatorname{softmax}(f(\mathbf{x}, \boldsymbol{\theta}^{(m)})) \right) \right\} \\
\end{split}
\label{eqn:marginal_prob_nll} 
\end{equation}

\subsection{Marginal Negative log-likelihood (i.e., average NLL or Gibbs cross-entropy)}
\begin{equation}
\begin{split}
&= \mathbb{E}_{p(\boldsymbol{\theta})}[- \log p(\mathbf{y} | \mathbf{x}, \boldsymbol{\theta})] \\
&= \int - \log \left\{ p(\mathbf{y} | \mathbf{x}, \boldsymbol{\theta}) \right\} p(\boldsymbol{\theta}) d \boldsymbol{\theta}\\
&\approx \frac{1}{M} \sum_{m=1}^M \left\{ - \log p(\mathbf{y} | \mathbf{x}, \boldsymbol{\theta}^{(m)}) \right\} \\
\end{split}
\label{eqn:marginal_nll} 
\end{equation}

\subsection{Negative log marginal likelihood (i.e., mixture NLL)}
\begin{equation}
\begin{split}
&= - \log p(\mathbf{y} | \mathbf{x}) \\
&= - \log \left\{ \int p(\mathbf{y} | \mathbf{x}, \boldsymbol{\theta})  p(\boldsymbol{\theta}) d \boldsymbol{\theta} \right\} \\
&\approx - \log \left\{ \frac{1}{M} \sum_{m=1}^M p(\mathbf{y} | \mathbf{x}, \boldsymbol{\theta}^{(m)}) \right\} \\
&= - \log \left\{ \sum_{m=1}^M p(\mathbf{y} | \mathbf{x}, \boldsymbol{\theta}^{(m)}) \right\} + \log M
\\
&= - \log \left\{ \sum_{m=1}^M \exp{ \log p(\mathbf{y} | \mathbf{x}, \boldsymbol{\theta}^{(m)}) } \right\} + \log M
\\
&= - \operatornamewithlimits{logsumexp}_m \left\{ \log p(\mathbf{y} | \mathbf{x}, \boldsymbol{\theta}^{(m)}) \right\} + \log M \\
\end{split}
\label{eqn:mixture_nll} 
\end{equation}

As we saw in \Cref{sec:rank-1}, due to Jensen's inequality, (\ref{eqn:mixture_nll}) $\leq$ (\ref{eqn:marginal_nll}).
However, we find that minimizing the upper bound (i.e. Eq. \ref{eqn:marginal_nll}) to be easier while allowing for improved generalization performance.
Note that for classification problems (i.e., Bernoulli or Categorical predictive distributions), Eq. \ref{eqn:marginal_prob_nll} is equivalent to Eq. \ref{eqn:mixture_nll}, though more generally, marginalizing the parameters of the predictive distribution before computing the negative log likelihood (Eq. \ref{eqn:marginal_prob_nll}) is different from marginalizing the likelihood before taking the negative log (Eq. \ref{eqn:mixture_nll}), and from marginalizing the negative log likelihood (Eq. \ref{eqn:marginal_nll}). Also note that though they are mathematically equivalent for classification, the formulation of Eq. \ref{eqn:mixture_nll} is more numerically stable than Eq. \ref{eqn:marginal_prob_nll}.

\clearpage
\section{Out-of-distribution Performance}
\subsection{CIFAR-10-C Results}
\label{sec:cifar10-c}
\begin{figure}[h!]
    \centering
\begin{subfigure}[h]{\figwidth}
\includegraphics[width=\figwidth]{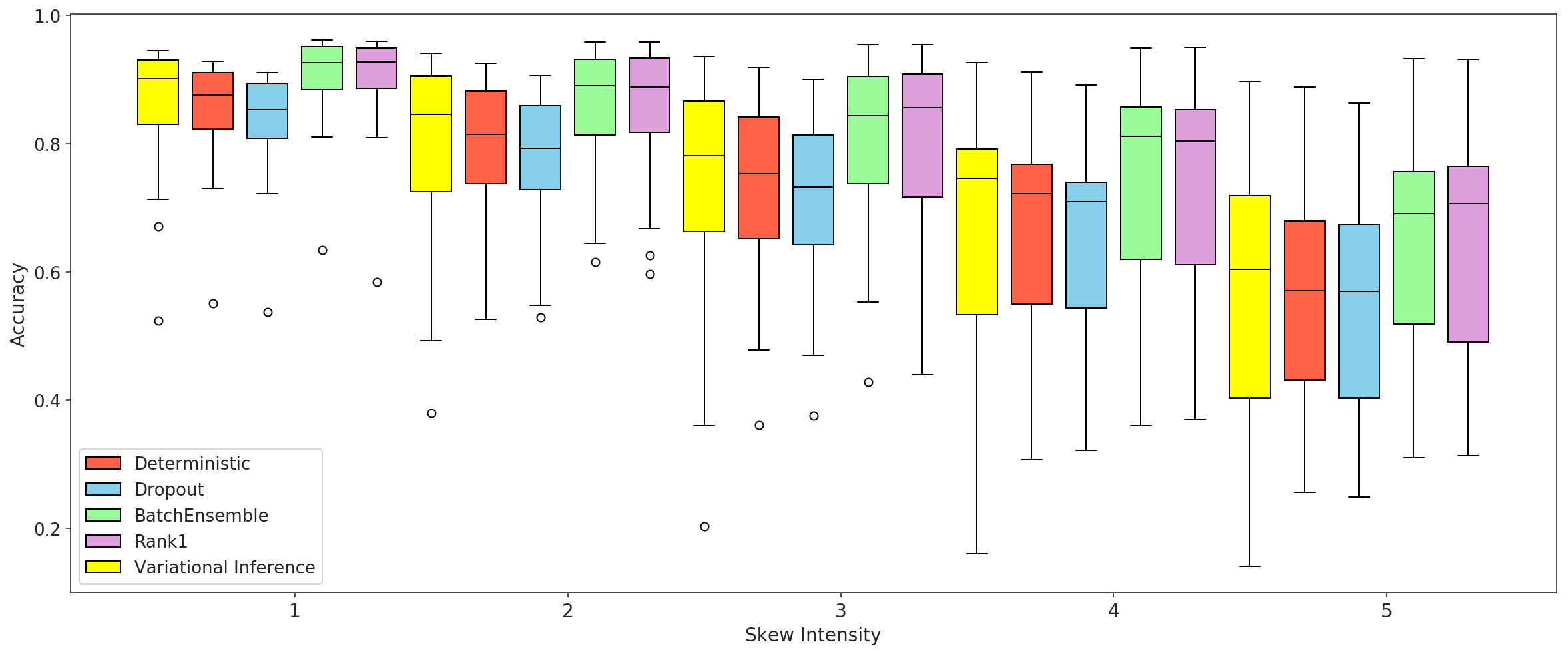}
\caption{Accuracy (higher is better).}
\end{subfigure} 
\begin{subfigure}[h]{\figwidth}
\includegraphics[width=\figwidth]{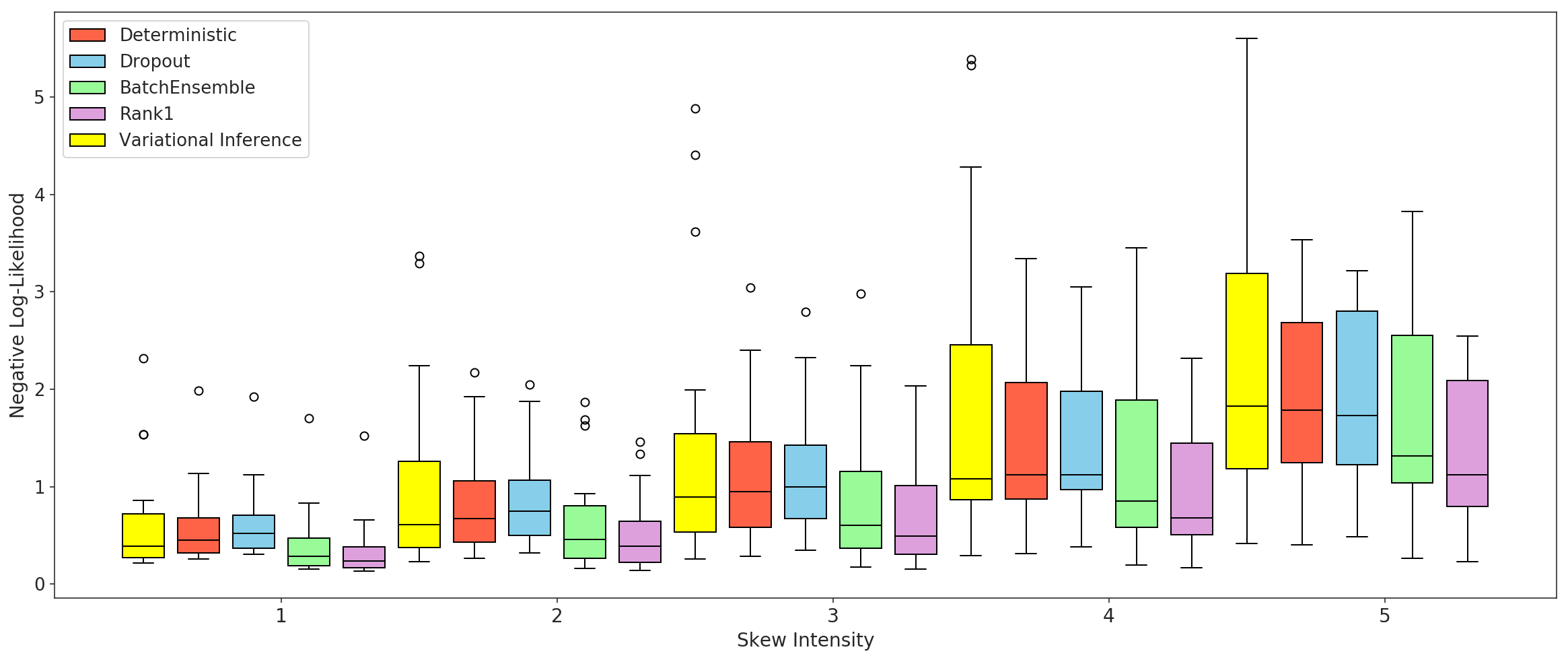}
\caption{Negative log-likelihood (lower is better).}
\end{subfigure} 
\begin{subfigure}[h]{\figwidth}
\includegraphics[width=\figwidth]{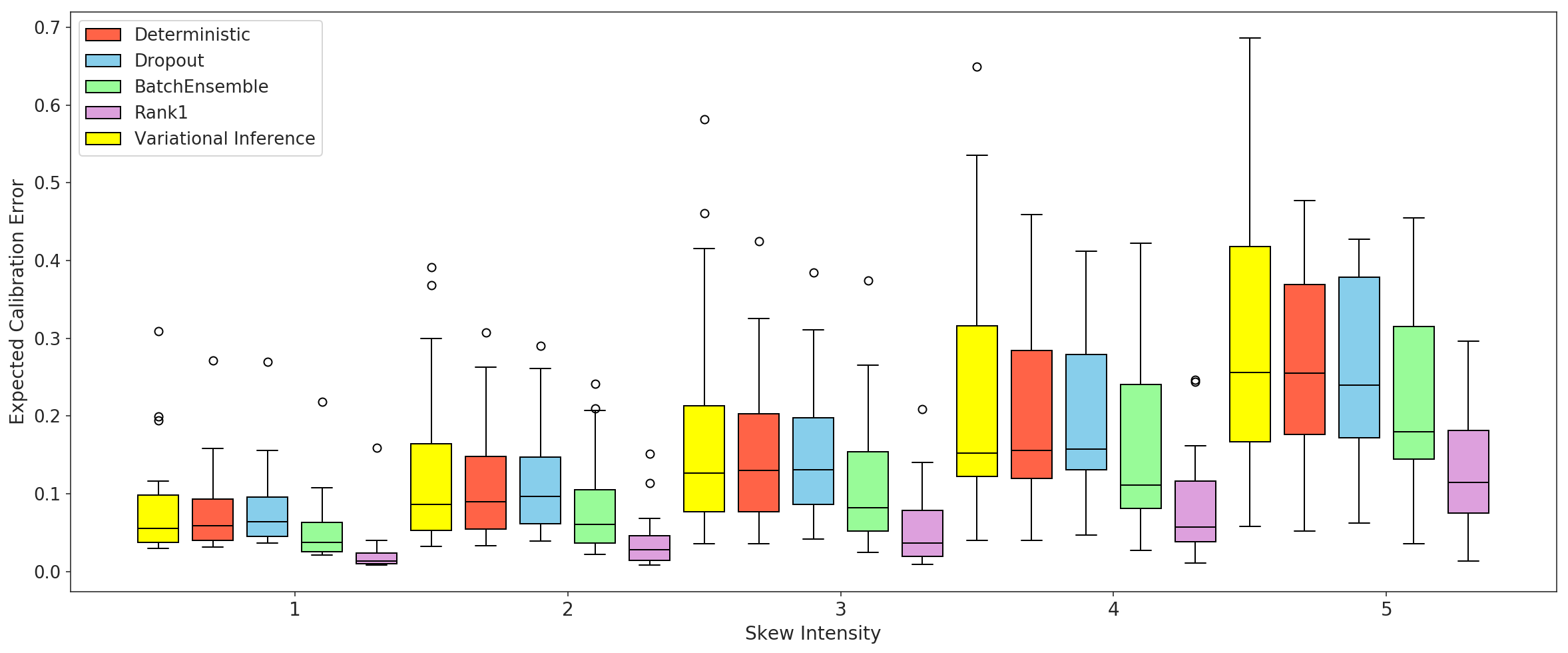}
\caption{ Expected calibration error  (lower is better).}
\end{subfigure} 
    \caption{Results on CIFAR-10-C showing median  performance across corruption types, and for increasing settings of the skew intensity. 
    }
    \label{fig:cifar10c-results}
\end{figure}

\clearpage

\subsection{CIFAR-100-C Results}
\label{sec:cifar100-c}

\begin{figure}[h!]
    \centering
\begin{subfigure}[h]{\figwidth}
\includegraphics[width=\figwidth]{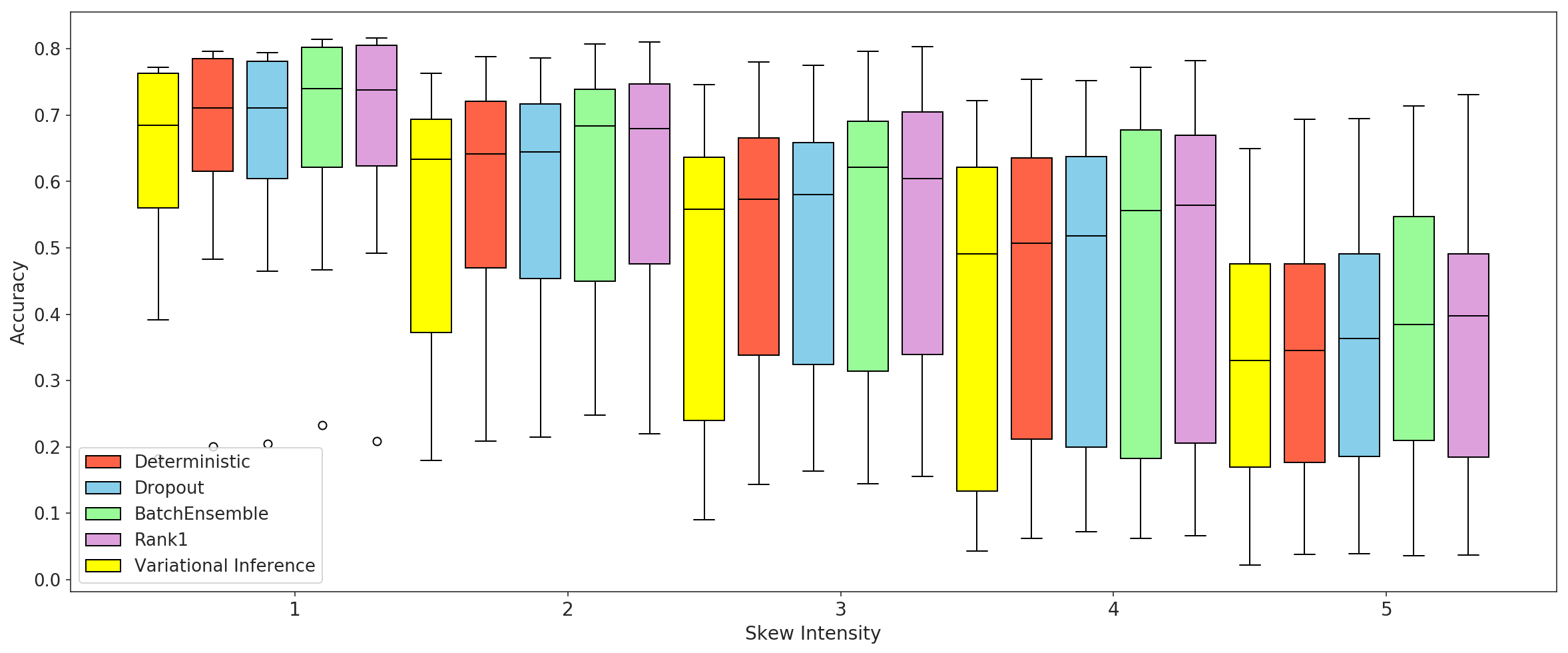}
\caption{Accuracy (higher is better).}
\end{subfigure} 
\begin{subfigure}[h]{\figwidth}
\includegraphics[width=\figwidth]{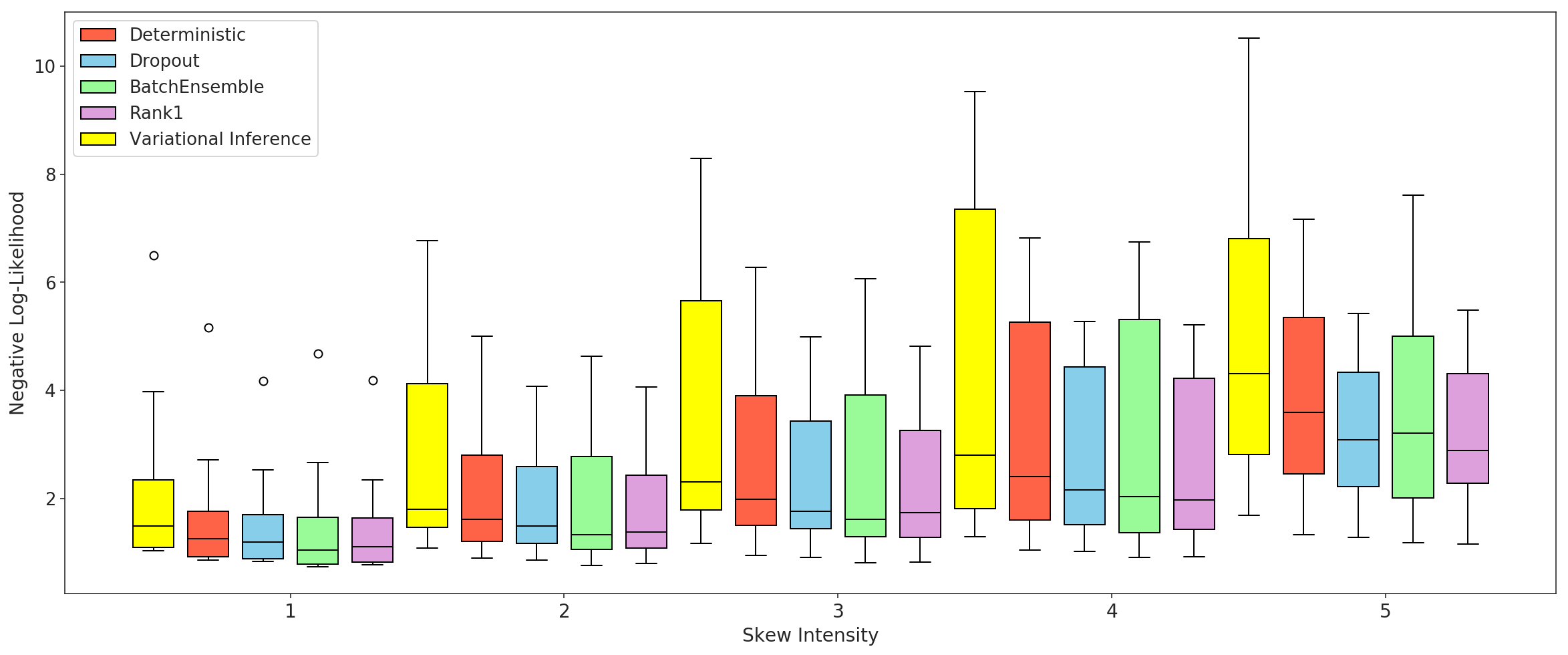}
\caption{Negative log-likelihood (lower is better).}
\end{subfigure} 
\begin{subfigure}[h]{\figwidth}
\includegraphics[width=\figwidth]{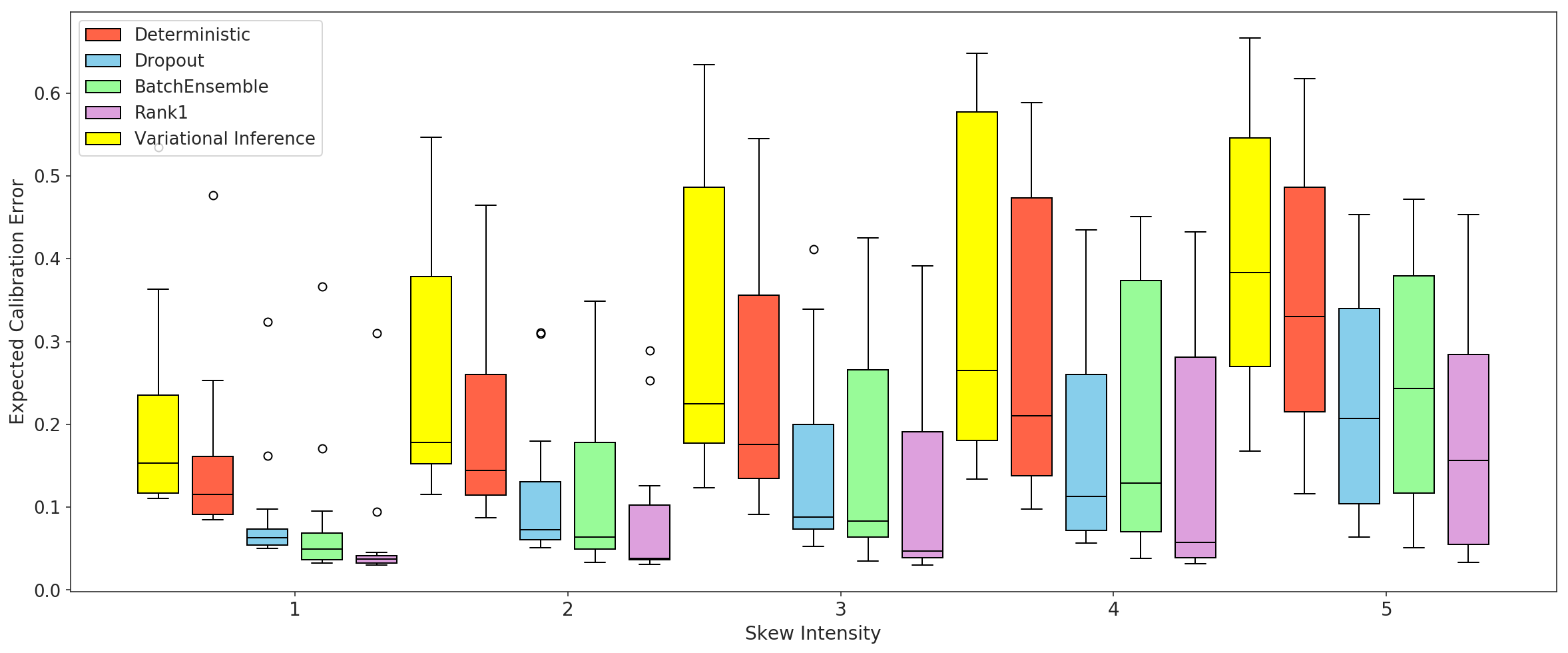}
\caption{ Expected calibration error  (lower is better).}
\end{subfigure} 
    \caption{Results on CIFAR-100-C showing median  performance across corruption types, and for increasing settings of the skew intensity. 
    }
    \label{fig:cifar100c-results}
\end{figure}

\clearpage
\subsection{ImageNet-C Results}
\label{sec:imagenet-c}

\begin{figure}[h!]
    \centering
\begin{subfigure}[h]{\figwidth}
\includegraphics[width=\figwidth]{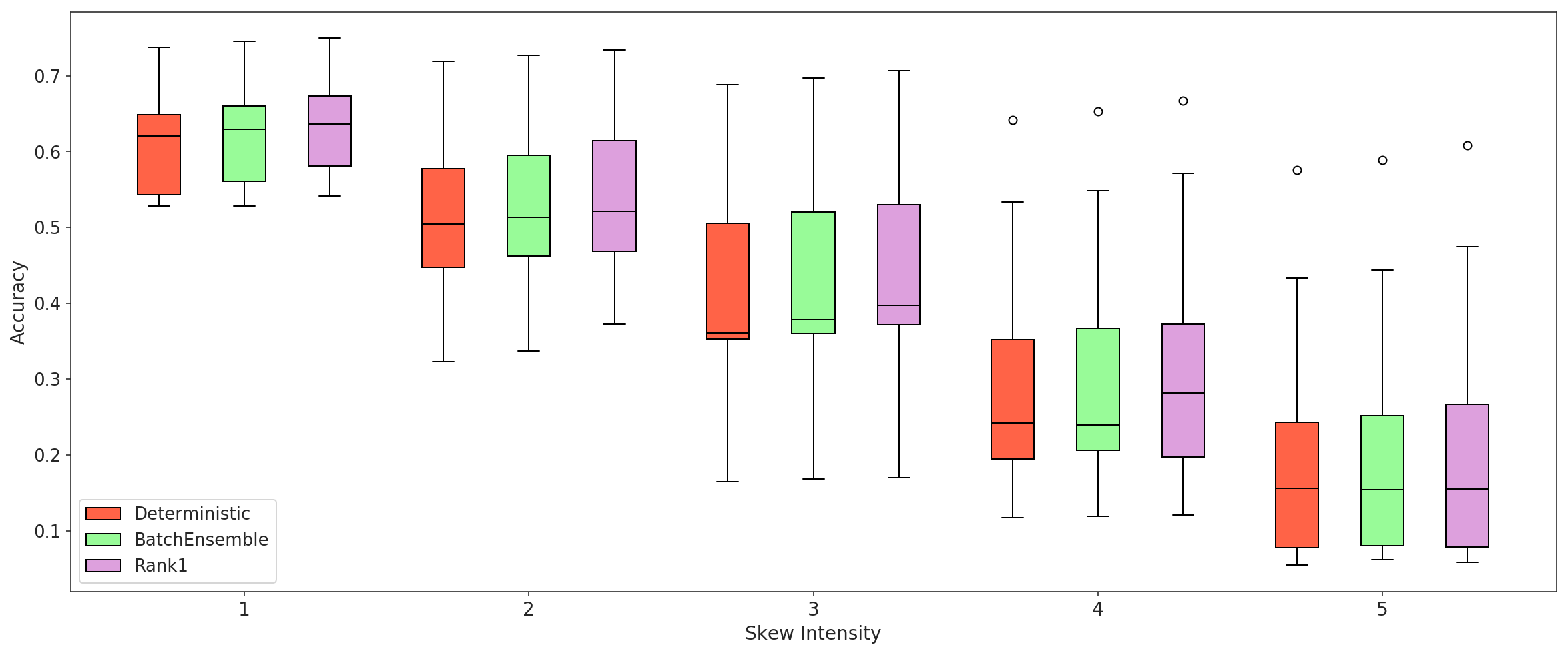}
\caption{Accuracy (higher is better).}
\end{subfigure} 
\begin{subfigure}[h]{\figwidth}
\includegraphics[width=\figwidth]{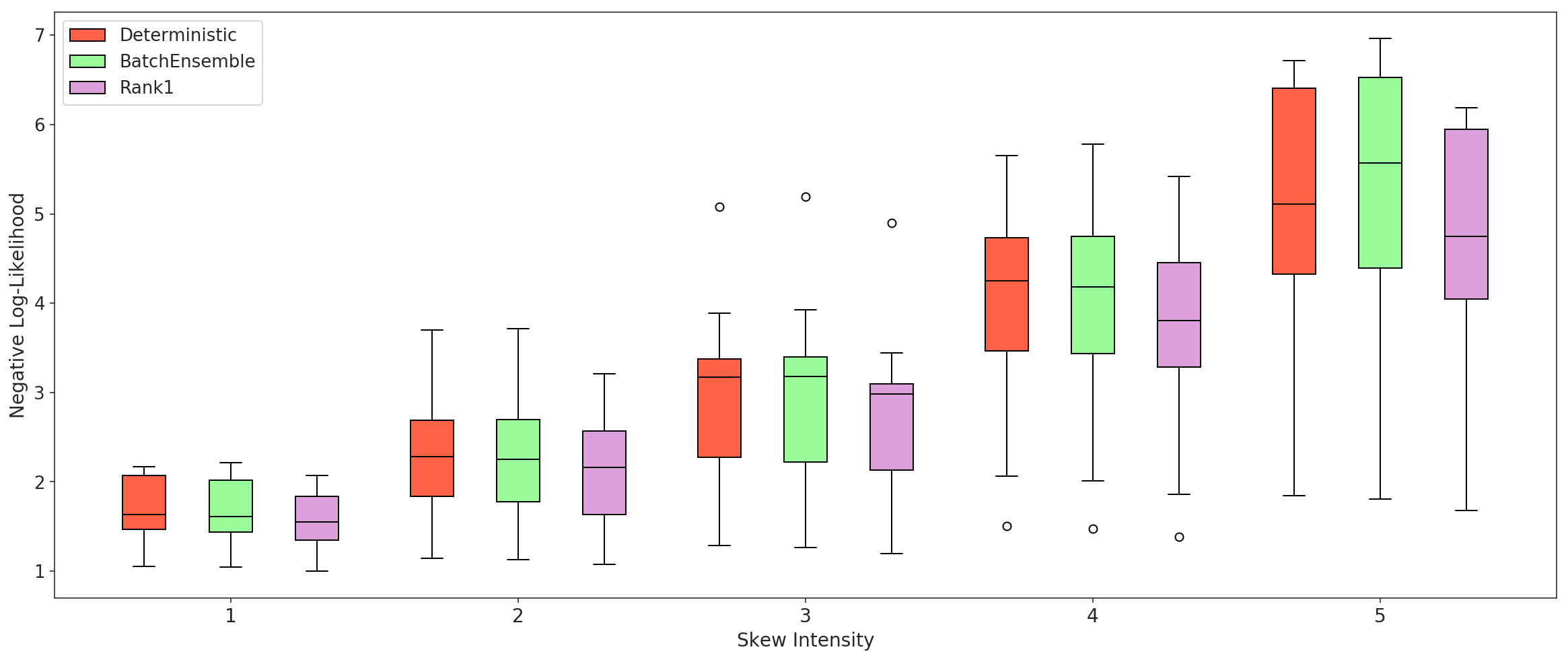}
\caption{Negative log-likelihood (lower is better).}
\end{subfigure} 
\begin{subfigure}[h]{\figwidth}
\includegraphics[width=\figwidth]{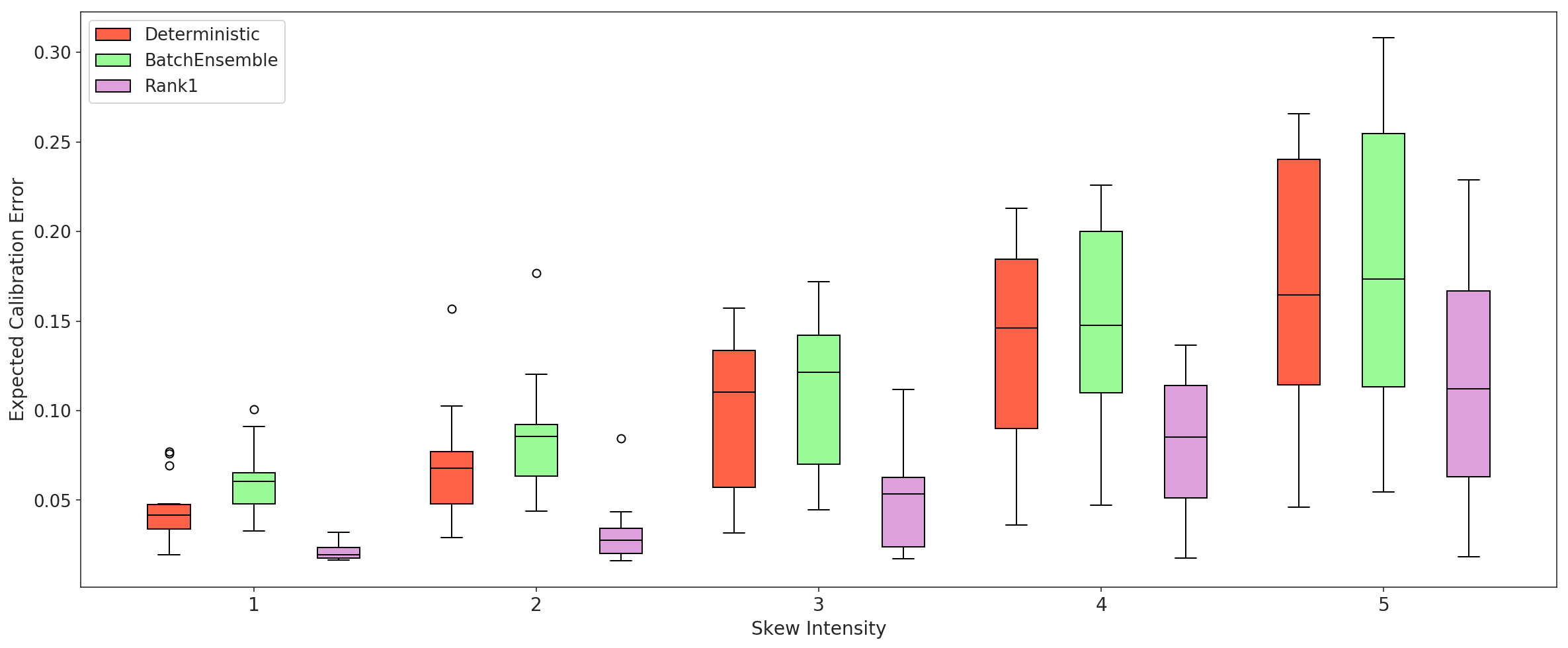}
\caption{ Expected calibration error  (lower is better).}
\end{subfigure} 
    \caption{Results on ImageNet-C showing median  performance across corruption types, and for increasing settings of the skew intensity. 
    }
    \label{fig:imagenet-c-results}
\end{figure}

\end{document}